\def\qed{\hfill $\vcenter{\hrule height .3mm
		\hbox {\vrule width .3mm height 2.1mm \kern 2mm \vrule width .3mm
			height 2.1mm} \hrule height .3mm}$ \bigskip}
\def \RR {\mathbb R}
\def \NN {\mathbb N}
\def \EE {\mathbb E}
\def \PP {\mathbb P}
\def \eps {\varepsilon}
\newtheorem{theorem}{Theorem}
\newtheorem{lemma}[theorem]{Lemma}
\newtheorem{claim}[theorem]{Claim}
\newtheorem{corollary}[theorem]{Corollary}
\theoremstyle{definition}
\newtheorem{definition}[theorem]{Definition}
\theoremstyle{remark}
\newtheorem{remark}[theorem]{Remark}
\newtheorem*{remark*}{Remark}
\long\def\symbolfootnotetext[#1]#2{\begingroup
	\def\thefootnote{\fnsymbol{footnote}}\footnotetext[#1]{#2}\endgroup}
\title{Size and depth of monotone neural networks: interpolation and approximation }
\author{Dan Mikulincer\\
	Massachusetts Institute of Technology \\
	\and Daniel Reichman\\
	Worcester Polytechnic Institute}
\begin{document}
%\keywords{2010 \emph{Mathematics Subject Classification.} Primary 68Q17; Secondary 68Q05.}%
 %   \keywords{\emph{Keywords:} monotone neural networks, interpolation, approximation, circuit complexity, lower bounds.}%
	\maketitle
	\begin{abstract}
		We study monotone neural networks with threshold gates where all the weights (other than the biases) are non-negative. We focus on the expressive power and efficiency of representation of such networks. Our first result establishes that every monotone function over $[0,1]^d$ can be approximated within arbitrarily small additive error by a depth-4 monotone network. When $d > 3$, we improve upon the previous best-known construction which has depth $d+1$. Our proof goes by solving the monotone interpolation problem for monotone datasets using a depth-4 monotone threshold network. In our second main result we compare size bounds between monotone and arbitrary neural networks with threshold gates. We find that there are monotone real functions that can be computed efficiently by networks with no restriction on the gates whereas monotone networks approximating these functions need exponential size in the dimension. 
	\end{abstract}

	\section{Introduction}
	The recent successes of neural networks are owed, at least in part, to their great approximation and interpolation capabilities. However, some prediction tasks require their predictors to possess specific properties. 
	This work focuses on monotonicity and studies the effect on overall expressive power when restricting attention to \emph{monotone neural networks}.
	
	Given $x,y \in \mathbb{R}^d$ we consider the partial ordering,
	\begin{equation} \label{eq:montone}
		x \geq y \iff \text{ for every } i =1,\dots d,\ [x]_i \geq [y]_i.
	\end{equation}
	Here, and throughout the paper, we use $[x]_i$ for the $i^{\mathrm{th}}$ coordinate of $x$.
	A function $f:[0,1]^d \rightarrow \mathbb{R}$ is \emph{monotone\footnote{As we will only be dealing with monotone increasing functions, we shall refer to monotone increasing functions as monotone.}} if for every two vectors $x,y \in [0,1]^d$, 
	\[
	x \geq y \implies f(x) \geq f(y).
	\]
	
	Monotone functions arise in several fields such as economics, operations research, statistics, computational complexity theory, healthcare, and engineering.
	For example, larger houses typically result in larger prices, and certain features are monotonically related to option pricing~\cite{dugas2000incorporating} and bond rating~\cite{daniels2010monotone}. As monotonicity constraints abound, there are specialized statistical methods aimed at fitting and modeling monotonic functions such as Isotonic Regression~\cite{brunk1972statistical,kalai2009isotron,kyng2015fast} as well as many other works related to monotone approximation~\cite{costantini1999local,hall2001nonparametric,willemans1996smoothing}. Neural networks are no exception: Several works are devoted to the study of approximating monotone functions using neural networks~\cite{sill1997monotonic,daniels2010monotone,sivaraman2020counterexample}. 

	When using a network to approximate a monotone function, one might try to ``force'' the network to be monotone. A natural way to achieve this is to consider only networks where every parameter (other than the biases) is non-negative\footnote{We restrict our attention to non-negative prediction problems: The domain of the function we seek to approximate does not contain vectors with a negative coordinate.}. Towards this aim, we introduce the following class of \emph{monotone networks}.
	
	Recall that the building blocks of a neural network are the single one-dimensional neurons $\sigma_{w,b}(x)=\sigma(\langle w, x\rangle+b)$ where $x \in \mathbb{R}^k$ is an input of the neuron, $w \in \mathbb{R}^k$ is a weight parametrizing the neuron, $\sigma:\mathbb{R} \rightarrow \mathbb{R}$ is the activation function and $b \in \mathbb{R}$ is a bias term. Two popular choices for activation functions, that we shall consider as well, are the ReLU activation function $\sigma(z)=\max(0,z)$ and the threshold activation function 
	$\sigma(z) = {\bf 1}(z \geq 0)$, which equals 1 if $z\geq 0$ and zero otherwise. We slightly abuse the term and say that a network is monotone if every single neuron is a monotone function. Since both ReLU and threshold are monotone, this requirement, of having every neuron a monotone function, translates to $w$ having all positive entries.
	
	Such a restriction on the weights can be seen as an inductive bias reflecting prior knowledge that the functions we wish to approximate are monotone. One advantage of having such a ``positivity bias'' is that it guarantees the monotonicity of the network. Ensuring that a machine learning model approximating a monotone function is indeed monotone is often desirable~\cite{gupta2016monotonic,milani2016fast,liu2020certified}. Current learning methods such as stochastic gradient descent and back-propagation for training networks are not guaranteed to return a network computing a monotone function even when the training data set is monotone. Furthermore, while there are methods for certifying that a given neural network implements a monotone function~\cite{liu2020certified,sivaraman2020counterexample}, the task of certifying monotonicity remains a non-trivial task. 
	
	Restricting the weights to be positive raises several questions on the behavior of monotone neural networks compared to their more general, unconstrained counterparts. For example, can monotone networks approximate arbitrary monotone functions within an arbitrarily small error?
	
	Continuing a line of work on monotone networks, \cite{sill1997monotonic, daniels2010monotone}, we further elucidate the above comparison and uncover some similarities (a universal approximation theorem) and some surprising differences (the interplay of depth and monotonicity) between monotone and arbitrary networks.
	We will mainly be interested in expressiveness, the ability to approximate monotone functions, and interpolate monotone data sets with monotone neural networks of constant depth.
	
	Some of our findings rely on results from Boolean circuit complexity. Boolean circuit complexity has been used before to address theoretical questions related to neural networks (e.g., ~\cite{hajnal1993threshold,chattopadhyay2021lower,shawe1992classes,parberry1994circuit}) and we believe that additional insights could be found by studying the intersection of circuit complexity and deep learning. 
	\subsection{Our contributions}
	\paragraph{On expressive power and interpolation}
	While it is well known that neural networks with ReLU activation are universal approximators (can approximate any continuous function on a bounded domain). Perhaps surprisingly, the same is not true for monotone networks and monotone functions. Namely, there are monotone functions that cannot be approximated within an arbitrary small additive error by a monotone network with ReLU gates regardless of the size and depth of the network. This fact was mentioned in~\cite{liu2020certified}: We provide a proof for completeness. 
	\begin{lemma}
		There exists a monotone function $f:[0,1] \to \mathbb{R}$ and a constant $c > 0$, such that for any monotone network $N$ with ReLU gates, there exists $x \in [0,1]$, such that
		\[
		|N(x) - f(x)| > c.
		\]
	\end{lemma}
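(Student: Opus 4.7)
The plan is to exploit the fact that a monotone ReLU network with non-negative weights computes not only a monotone but also a \emph{convex} function, and then pick $f$ to be a monotone \emph{strictly concave} function, which is provably far from any monotone convex function.

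First I would prove by induction on depth that every monotone network $N$ with ReLU activations computes a convex function on $\mathbb{R}^d$. The base case is a single neuron $\sigma(\langle w,x\rangle + b) = \max(0,\langle w,x\rangle + b)$, which is convex. For the inductive step, if every neuron in layer $\ell$ computes a convex function, then the pre-activation of any neuron in layer $\ell+1$ is a non-negative linear combination of convex functions plus a constant (because all weights are non-negative), hence convex. Composing with ReLU, which is convex and non-decreasing, preserves convexity, closing the induction. The output neuron is handled the same way.

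Now I would take $f(x)=\sqrt{x}$, which is monotone and strictly concave on $[0,1]$, with $f(0)=0$, $f(1)=1$, and $f(1/2)=1/\sqrt{2}$. Suppose for contradiction that some monotone ReLU network $N$ satisfies $|N(x)-f(x)|\leq c$ for all $x\in[0,1]$. By the first step $N$ is convex, so
\[
N(1/2)\;\leq\;\tfrac{1}{2}N(0)+\tfrac{1}{2}N(1).
\]
Combining with $N(0)\leq f(0)+c=c$, $N(1)\leq f(1)+c=1+c$, and $N(1/2)\geq f(1/2)-c=\tfrac{1}{\sqrt{2}}-c$, I get
\[
\tfrac{1}{\sqrt{2}}-c \;\leq\; \tfrac{1}{2}+c,
\]
which is impossible once $c<\tfrac{1}{2}\bigl(\tfrac{1}{\sqrt{2}}-\tfrac{1}{2}\bigr)=\tfrac{\sqrt{2}-1}{4}$. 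Choosing any positive $c$ below this threshold furnishes the required constant and the required point $x$ where the approximation must fail.

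The only delicate step is the convexity induction, and even there the main thing to be careful about is the non-negativity of the weights at every internal layer (not just the first) — without that, a concave activation applied to a convex quantity could leave the convex cone. Since the paper's definition of a monotone network explicitly forces every weight to be non-negative, this obstacle is mild, and the rest of the argument is a one-line application of Jensen's inequality at the midpoint.
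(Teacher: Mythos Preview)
Your proof is correct and follows essentially the same approach as the paper: both establish that a monotone ReLU network is necessarily convex (via non-negative combinations plus the convexity/monotonicity of ReLU) and then exhibit $f(x)=\sqrt{x}$ as a monotone non-convex target. The only cosmetic difference is that the paper evaluates the gap at $x=\tfrac{1}{4}$ to obtain the slightly larger constant $c=\tfrac{1}{8}$, whereas your midpoint comparison yields $c=\tfrac{\sqrt{2}-1}{4}$; either suffices for the lemma.
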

	\begin{proof}
		It is known that a sum of convex functions $f_i, \sum\alpha_i f_i$ is convex provided that for every $i, \alpha_i \geq 0$. It is also known that the maximum of convex functions $g_i$, $\max_i \{g_i\}$ is a convex function. It follows from the definition of the ReLU gate (in particular, ReLU is a convex function) that a neural network with positive weights at all neurons is convex. As there are monotone functions that are not convex, the result follows. 
		
		For a concrete example, one may take the function $f(x) = \sqrt{x}$ for which the result holds with $c = \frac{1}{8}$.
	\end{proof}
	In light of the above, we shall hereafter only consider monotone networks with the threshold activation function and discuss, for now, the problem of interpolation. 
	Here a monotone data set is a set of $n$ labeled points $(x_i,y_i)_{i \in [n]} \in (\mathbb{R}^d \times \mathbb{R})^n$ with the property
	\[
	i\neq j \implies x_i \neq x_j \quad\text{and}\quad x_i \leq x_j \implies y_i \leq y_j.
	\]
	In the \emph{monotone interpolation problem} we seek to find a monotone network $N$ such that for every $i \in [n], N(x_i)=y_i$. 

	For general networks (no restriction on the weights) with threshold activation, it has been established, in the work of Baum \cite{baum1988capabilities}, that even with $2$ layers, for any labeled data set in $\mathbb{R}^d$, there exists an interpolating network.

	In the next lemma, we demonstrate another negative result, which shows an inherent loss of expressive power when transitioning to $2$-layered monotone threshold networks, provided that the dimension is at least two. We remark that when the input is real-valued (i.e., one-dimensional), an interpolating monotone network always exists. This fact is simple, and we omit proof: It follows similar ideas to those in ~\cite{daniels2010monotone}.
	\begin{lemma} \label{lem:impossible}
		Let $d \geq 2$. There exists a monotone data set $(x_i,y_i)_{i \in [n]} \in (\mathbb{R}^d \times \mathbb{R})^n$, such that any depth-$2$ monotone network $N$, with a threshold activation function must satisfy,
		\[
		N(x_i) \neq y_i,
		\]
		for some $i \in [n]$.
	\end{lemma}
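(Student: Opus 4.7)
The plan is to exhibit an explicit monotone data set of four points in $\mathbb{R}^2$ (which we embed into $\mathbb{R}^d$ by appending zero coordinates) and derive from the structure of a depth-$2$ monotone threshold network a linear inequality on $N(p_1),\dots,N(p_4)$ that the chosen labels violate. Concretely, I will take $p_1 = (2,0)$, $p_2 = (0,2)$, $p_3 = (1,1)$, $p_4 = (2,2)$, with labels $y_1 = y_2 = y_4 = 1$ and $y_3 = 0$. The only comparable pairs are $p_3 \leq p_4$ and $p_1,p_2 \leq p_4$, and every label inequality they impose is satisfied, so the data set is monotone. The algebraic facts that drive the argument are
\[
p_3 = \tfrac{1}{2}(p_1 + p_2) \qquad\text{and}\qquad p_4 = p_1 + p_2.
\]

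The heart of the proof is the following single-neuron inequality: for any hidden monotone threshold $h(x) = \mathbf{1}(\langle w, x\rangle \geq b)$ with $w \geq 0$ coordinatewise,
\[
h(p_1) + h(p_2) \leq h(p_3) + h(p_4).
\]
To prove it, set $a_i := \langle w, p_i\rangle$. Since $w$ and $p_1, p_2$ are coordinatewise non-negative, $a_1, a_2 \geq 0$, and the identities above give $a_3 = (a_1+a_2)/2$ and $a_4 = a_1 + a_2$. Assuming WLOG $a_1 \leq a_2$, one has $a_3 \geq a_1$ and $a_4 \geq a_2$, and since the univariate map $t \mapsto \mathbf{1}(t \geq b)$ is non-decreasing, $h(p_3) \geq h(p_1)$ and $h(p_4) \geq h(p_2)$; adding yields the claim.

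Writing a depth-$2$ monotone threshold network as $N(x) = \sum_j \alpha_j h_j(x) + c$ with $\alpha_j \geq 0$, I multiply the per-neuron inequality by $\alpha_j$ and sum over $j$; the bias $c$ contributes $2c$ to each side and cancels, so
\[
N(p_1) + N(p_2) \leq N(p_3) + N(p_4).
\]
If $N$ interpolated the data set, this would read $2 = y_1 + y_2 \leq y_3 + y_4 = 1$, which is absurd. Hence no such $N$ exists.

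The main technical content is the per-neuron inequality, and the delicate point there is the inequality $a_4 \geq a_2$, which uses $a_1 \geq 0$; this is precisely the place where the monotone restriction $w \geq 0$ enters. Identifying the right affine configuration---four points satisfying $p_3 = (p_1+p_2)/2$ and $p_4 = p_1+p_2$ with $p_1, p_2 \geq 0$---is the conceptual step, after which the remainder of the argument is routine linear algebra.
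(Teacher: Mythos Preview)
Your proof is correct. The core idea coincides with the paper's: choose points so that $(1,1)$ is the midpoint of $(2,0)$ and $(0,2)$, and exploit that for any nonnegative weight vector $w$ the value $\langle w,(1,1)\rangle$ lies between $\langle w,(2,0)\rangle$ and $\langle w,(0,2)\rangle$; this forces a linear inequality among the outputs of any depth-$2$ monotone threshold network that the target labels violate.

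The paper's version is a little leaner. It uses only the three points $x_1=(2,0),\,x_2=(0,2),\,x_3=(1,1)$ with labels $y_1=y_2=0,\,y_3=1$ and proves the inequality $N(x_1)+N(x_2)\geq N(x_3)$ by partitioning the neurons active at $x_3$ according to which coordinate of the weight is larger (this is exactly your observation $a_3\leq\max(a_1,a_2)$). Your labelling reverses the roles, which forces you to introduce the fourth point $p_4=(2,2)$ and use $a_1\geq 0$ to get $a_4\geq a_2$. Both arguments are equally valid; the paper's saves one point and extends naturally to $d+1$ points in dimension $d$ (standard basis vectors scaled by $d$ together with the all-ones vector), while you stay in the plane and embed. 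What you gain is a clean symmetric per-neuron inequality $h(p_1)+h(p_2)\leq h(p_3)+h(p_4)$ that handles the output bias transparently.
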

	Given the above result, it may seem that, similarly to the case of monotone networks with ReLU activations, the class of monotone networks with threshold activations is too limited, in the sense that it cannot approximate any monotone function with a constant depth (allowing the depth to scale with the dimension was considered in \cite{daniels2010monotone}, see below). One reason for such a belief is that, for non-monotone networks, depth $2$ suffices to ensure universality. Any continuous function over a bounded domain can be approximated by a depth-$2$ network~\cite{barron1993universal,cybenko1989approximation,hornik1989multilayer} and this universality result holds for networks with threshold or ReLU as activation functions.  Our first main result supports the contrary to this belief. We establish a depth separation result for monotone threshold networks and show that monotone networks can interpolate arbitrary monotone data sets by slightly increasing the number of layers. Thereafter, a simple argument shows that monotone networks of bounded depth are universal approximators of monotone functions. As noted, this is in sharp contrast to general neural networks, where adding extra layers can affect the efficiency of the representation~\cite{eldan2016power}, but does not change the expressive power.
	
	\begin{theorem} \label{thm:construction}
		Let $(x_i,y_i)_{i \in [n]} \in (\mathbb{R}^d \times \mathbb{R})^n$ be a monotone data set. There exists a monotone threshold network $N$, with $4$ layers and $dn + 2n$ neurons such that,
		\[
		N(x_i) = y_i,
		\]
		for every $i \in [n]$.
		
		Moreover, if the set $(x_i)_{i \in [n]}$ is totally-ordered, in the sense that, for every $i, j \in [n]$, either $x_i \leq x_j$ or $x_j \geq x_i$, then one may take $N$ to have $3$ layers and $2n$ neurons.
	\end{theorem}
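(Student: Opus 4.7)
My plan is to interpolate the data by a canonical monotone ``staircase'' that takes value $y_k$ at each $x_k$, and then to realise it as a threshold circuit with non-negative weights. First, I would relabel the points so that $y_1 \le y_2 \le \dots \le y_n$; this is permissible because the monotonicity $x_i \le x_j \Rightarrow y_i \le y_j$ is a property of the point set. Setting $y_0 := y_1$ makes the consecutive increments $\Delta_i := y_i - y_{i-1}$ non-negative. Writing $M(x) := \max\{i : x_i \le x\}$, monotonicity of the data forces $y_i \le y_k$ for every $i$ with $x_i \le x_k$, so $y_{M(x_k)} = y_k$, and telescoping gives
\[
y_{M(x)} \;=\; y_0 + \sum_{i=1}^n \Delta_i\cdot \mathbf{1}\bigl[\exists\, j \ge i:\ x \ge x_j\bigr].
\]

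To realise this formula as a depth-$4$ monotone threshold network, I would use three hidden layers followed by a linear output, each with non-negative weights. Layer~$1$ consists of $nd$ neurons $h_{i,j}(x) = \mathbf{1}\{[x]_j \ge [x_i]_j\}$, each a single-coordinate threshold with weight $1$. Layer~$2$ is an AND layer of $n$ neurons $g_i(x) = \mathbf{1}\bigl[\sum_{j} h_{i,j}(x) \ge d\bigr]$, which equals $\mathbf{1}\{x \ge x_i\}$. Layer~$3$ is an OR layer of $n$ neurons $G_i(x) = \mathbf{1}\bigl[\sum_{j \ge i} g_j(x) \ge 1\bigr] = \mathbf{1}\{M(x) \ge i\}$. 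The output computes the non-negative linear combination $N(x) = y_0 + \sum_i \Delta_i\, G_i(x)$. At $x = x_k$, this collapses to $y_0 + (y_{M(x_k)} - y_0) = y_k$. The total size is $nd + 2n = O(nd)$.

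For the totally-ordered case, I would exploit that a single positive linear functional already separates the chain. Ordering $x_1 \le \dots \le x_n$, the functional $\ell(x) := \sum_j [x]_j$ is strictly increasing along the chain, because $x_i \ne x_{i+1}$ together with $x_i \le x_{i+1}$ forces strict inequality in at least one coordinate. Hence $g_i(x) = \mathbf{1}\{\ell(x) \ge \ell(x_i)\}$ is a single threshold neuron, collapsing Layers~$1$ and~$2$ of the general construction into a single layer of $n$ neurons. Keeping the OR layer and the linear output then gives depth $3$ and $O(n)$ neurons.

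The main obstacle is that the naive monotone extension $\max_i y_i \mathbf{1}\{x \ge x_i\}$ is an OR of ANDs and cannot be implemented with non-negative weights directly: a pure $\sum_i \alpha_i \mathbf{1}\{x \ge x_i\}$ yields a linear system for the $\alpha_i$ that has no non-negative solution in general (already for three pairwise incomparable points in $\mathbb R^2$). The key trick is the preliminary sort of the $y$'s, which converts the max into a telescoping sum of indicators $\mathbf{1}\{M(x) \ge i\}$, each of which \emph{can} be computed by an AND followed by an OR using only non-negative weights.
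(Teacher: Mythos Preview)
Your construction is essentially identical to the paper's: the same sort of the labels, the same three hidden layers (coordinatewise comparisons, AND to get $\mathbf{1}\{x\ge x_i\}$, OR over $j\ge i$), and the same telescoping output with weights $y_i-y_{i-1}$. The only minor difference is in the totally-ordered case, where the paper picks a separating coordinate $r(i)$ for each $x_i$, while you use the single functional $\ell(x)=\sum_j[x]_j$; both collapse the first two layers into one layer of $n$ neurons and yield the same depth-$3$, $O(n)$ network.
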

	We also complement Theorem $1$ with a lower bound that shows that the number of neurons we use is essentially tight, up to the dependence on the dimension. 
	\begin{lemma} \label{lem:lowerbound}
		There exists a monotone data set $(x_i,y_i)_{i \in [n]} \subset (\mathbb{R}^d \times \mathbb{R})^n$ such that, if $N$ is an interpolating monotone threshold network, the first layer of $N$ must contain $n$ units. Moreover, this lower bound holds when the set $(x_i)_{i\in [n]}$ is totally-ordered.
	\end{lemma}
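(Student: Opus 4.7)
The plan is to exhibit a totally-ordered data set lying along a single axis and show that each first-layer unit can contribute only one bit of separation as one traverses the chain. Specifically, I would take $x_i := i \cdot e_1 \in \mathbb{R}^d$ and $y_i := i$ for $i \in [n]$; this collection is totally-ordered, so proving the lower bound for it will immediately settle the ``moreover'' clause as well.

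Fix any interpolating monotone threshold network $N$ and let $m$ denote the number of units in its first layer. For any such unit, parameterized by $w \geq 0$ and $b \in \mathbb{R}$, the pre-activation at $x_i$ equals $i \cdot [w]_1 + b$, which is non-decreasing in $i$. Hence the output sequence $\bigl( \mathbf{1}(i \cdot [w]_1 + b \geq 0) \bigr)_{i \in [n]}$ is a non-decreasing $\{0,1\}$-valued sequence, and therefore has the form $0^{k}1^{n-k}$ for a single transition index $k \in \{0, 1, \dots, n\}$. On the chain, each first-layer unit is completely specified by this integer.

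Next I would count the distinct first-layer output vectors produced across the data. Letting $t_j$ be the transition index of the $j$-th first-layer unit, the output vector at $x_i$ is $\bigl( \mathbf{1}(i > t_j) \bigr)_{j \in [m]}$, and two consecutive points $x_i$, $x_{i+1}$ yield different first-layer vectors only if some $t_j = i$; therefore at most $m+1$ distinct vectors arise among the $n$ data points. Because every subsequent layer of $N$ has non-negative weights, the final output of $N$ on $x_i$ is a deterministic function of the first-layer vector at $x_i$, so any two data points with identical first-layer vectors must receive the same final label. Since the $y_i$ are pairwise distinct, the first layer must produce at least $n$ distinct output vectors, yielding $m+1 \geq n$.

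The main obstacle is not conceptual but a bookkeeping issue: the counting above gives $m \geq n-1$, so matching the statement's ``$n$ units'' exactly requires either discarding the trivial all-zero pattern (which carries no separating information since no unit with the corresponding threshold contributes to distinguishing labels) or padding the chain with one extra endpoint and reindexing. Once that minor adjustment is handled, the core insight---that a non-negatively weighted threshold neuron restricted to a totally-ordered chain is just a single-transition step function---drives the entire argument, and the bound transfers to the ``moreover'' part for free since the example is already a chain.
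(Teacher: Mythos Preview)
Your approach is essentially the paper's: take a totally-ordered chain with distinct labels, observe that each first-layer threshold unit is a single-step function along the chain, and count the number of distinct first-layer output vectors. The paper phrases the counting via the ascending chain of support sets $I_i=\{\ell:[N_1(x_i)]_\ell\neq 0\}$ rather than via transition indices, but the two formulations are equivalent.

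On the off-by-one you flag: the paper's own proof has exactly the same gap. It asserts that if the first layer has $k<n$ units then two consecutive $I_i$ must coincide, but a strictly increasing chain of subsets of $[k]$ can have length $k+1$, so the argument only yields $k\geq n-1$; indeed one can check (e.g.\ $n=2$) that $n-1$ units suffice. So your honesty about the bookkeeping is warranted, and no clean ``discard the all-zero pattern'' trick closes it; the lemma is really an $\Omega(n)$ statement and both proofs establish $m\geq n-1$.
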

	The lower bound of Lemma \ref{lem:lowerbound} demonstrates another important distinction between monotone and general neural networks. According to \cite{baum1988capabilities}, higher dimensions allow general networks, with $2$ layers, to be more compact. Since the number of parameters in the networks increases with the dimension, one can interpolate labeled data sets in general position with only $O\left(\frac{n}{d}\right)$ neurons. Moreover, for deeper networks, a recent line of work, initiated in \cite{vershynin2020memory}, shows that $O(\sqrt{n})$ neurons suffice. Lemma \ref{lem:lowerbound} shows that monotone networks cannot enjoy the same speed-up, either dimensional or from depth, in efficiency.

	Since we are dealing with monotone functions, our interpolation results immediately imply a universal approximation theorem for monotone networks of depth $4$.
	
	\begin{theorem} \label{thm:universalapprox}
		Let $f:[0,1]^d \to \mathbb{R}$ be a continuous monotone function and let $\varepsilon >0$. Then, there exists a monotone threshold network $N$, with $4$ layers, such that, for every $x \in[0,1]^d$,
		\[
		|N(x) - f(x)| \leq \varepsilon.
		\]
		If the function $f$ is $L$-Lipschitz, for some $L > 0$, one can take $N$ to have $(d+2)\left(\frac{L\sqrt{d}}{\eps}\right)^d$ neurons.
	\end{theorem}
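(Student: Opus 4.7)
The plan is to deduce the approximation theorem from the interpolation theorem (Theorem~\ref{thm:construction}) by sampling $f$ on a sufficiently fine axis-aligned grid and observing that both $f$ and any monotone interpolant are sandwiched between their values on neighboring grid points.

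First I would fix a grid spacing $\delta>0$ (to be chosen), let $G_\delta = \{0,\delta,2\delta,\dots,1\}^d \cap [0,1]^d$, and define a monotone data set by $y_x := f(x)$ for each $x \in G_\delta$. Monotonicity of the data set follows immediately from monotonicity of $f$. Applying Theorem~\ref{thm:construction} yields a depth-$4$ monotone threshold network $N$ with $O(|G_\delta|\cdot d)$ neurons such that $N(x)=f(x)$ for every $x\in G_\delta$.

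Next I would extend the agreement from the grid to all of $[0,1]^d$ via a sandwiching argument. For an arbitrary point $x\in[0,1]^d$, let $x^-\in G_\delta$ be the grid point obtained by rounding each coordinate of $x$ down to the nearest multiple of $\delta$, and let $x^+\in G_\delta$ be the corresponding round-up. Then $x^- \leq x \leq x^+$ in the coordinate-wise partial order, so monotonicity of $f$ gives
\[
f(x^-) \leq f(x) \leq f(x^+),
\]
and monotonicity of $N$ (guaranteed by the non-negative weight constraint) combined with $N(x^{\pm})=f(x^{\pm})$ gives
\[
f(x^-) = N(x^-) \leq N(x) \leq N(x^+) = f(x^+).
\]
Both $f(x)$ and $N(x)$ therefore lie in the interval $[f(x^-),f(x^+)]$, so $|N(x)-f(x)| \leq f(x^+)-f(x^-)$. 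It remains to bound $f(x^+)-f(x^-)$ uniformly in $x$. In the continuous case, $f$ is uniformly continuous on the compact cube $[0,1]^d$, so choosing $\delta$ small enough forces $f(x^+)-f(x^-)\leq \varepsilon$ since $\|x^+-x^-\|_\infty\leq\delta$. In the $L$-Lipschitz case (with respect to the Euclidean norm), $\|x^+-x^-\|_2\leq\delta\sqrt{d}$, so $f(x^+)-f(x^-)\leq L\delta\sqrt{d}$, and choosing $\delta = \varepsilon/(L\sqrt{d})$ makes this at most $\varepsilon$.

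Finally I would bound the network size. With $\delta=\varepsilon/(L\sqrt{d})$, the grid has $|G_\delta| = (\lceil L\sqrt{d}/\varepsilon\rceil+1)^d = O\bigl((L\sqrt{d}/\varepsilon)^d\bigr)$ points, and Theorem~\ref{thm:construction} produces a network with $O(|G_\delta|\cdot d) = O\bigl(d(L\sqrt{d}/\varepsilon)^d\bigr)$ neurons, matching the stated bound. I do not anticipate a real obstacle here: the only thing to verify carefully is that the endpoints $x^-, x^+$ actually lie in $G_\delta$ (which is immediate after including the boundary $0$ and $1$ in each coordinate) and that the modulus-of-continuity estimate is uniform, both of which follow from compactness of $[0,1]^d$.
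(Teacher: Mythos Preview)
Your proposal is correct and follows essentially the same approach as the paper: sample $f$ on a fine grid, invoke Theorem~\ref{thm:construction} to interpolate, and use the monotone sandwiching $f(x^-)\le f(x),N(x)\le f(x^+)$ together with uniform continuity (or the Lipschitz bound) to control the error. Your handling of the boundary (explicitly including $0$ and $1$ in each coordinate of $G_\delta$) is in fact slightly more careful than the paper's.
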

	\iffalse
	Theorem \ref{thm:universalapprox} also extends to sigmoidal activation. Since a threshold function can be approximated arbitrarily well, almost everywhere, by a sigmoid, Theorem \ref{thm:universalapprox} also applies to monotone networks with a sigmoidal activation.
	\fi	
	While it was previously proven, in \cite{daniels2010monotone}, that monotone networks with a threshold activation function can approximate any monotone function, the \emph{depth} in the approximating network given by \cite{daniels2010monotone} scales \emph{linearly} with the dimension. Our result is thus a significant improvement, whenever $d > 3$, which only requires constant depth. When looking at the size of the network, the linear depth construction, in \cite{daniels2010monotone} iteratively evaluates Riemann sums and builds a network that is piecewise constant on a grid. Hence, for $L$-Lipschitz functions, the size of the network would be comparable to the size guaranteed by Theorem \ref{thm:universalapprox}. 
	Whether one can achieve similar results to Theorems \ref{thm:construction} and \ref{thm:universalapprox} with only $3$ layers is an interesting question that we leave for future research.
	
	\paragraph{Efficiency when compared to general networks}
	We have shown that, with $4$ layers, monotone networks can serve as universal approximates. However, even if a monotone network can approximate a monotone function arbitrarily well, it might require a much larger size when compared to unconstrained networks. In this case, the cost of having a much larger network might outweigh the benefit of having a network that is guaranteed to be monotone.
	
	Our second main result shows that this can sometimes be the case. We show that using monotone networks to approximate, in the $\ell_{\infty}$-norm, a monotone function $h:[0,1]^d \rightarrow \mathbb{R}$ can lead to an exponential blow-up in the number of neurons. Namely, we give a construction of a smooth monotone function $h:[0,1]^d \rightarrow \mathbb{R}$ with a $\textrm{poly}(d)$ Lipschitz constant such that $h$ can be approximated within an additive error of $\eps>0$ by a general neural network with $\textrm{poly}(d)$ neurons. Yet any \emph{monotone} network approximating $h$ within error smaller than $\frac{1}{2}$ requires super-polynomial (exponential) size in $d$.
	
	\begin{theorem}\label{thm:monotoneeffic}
		There exists a monotone function $h:[0,1]^d \rightarrow \mathbb{R}$, such that:
		\begin{itemize}
			\item Any monotone threshold network $N$ which satisfies,
			\[
			|N(x) - h(x)| < \frac{1}{2}, \text{ for every } x \in [0,1]^d,
			\]
			must have $e^{d^{\alpha}}$ neurons, for some $\alpha > 0$. 
			\item For every fixed $\eps > 0$, there exists a general threshold network $N$, which has $\mathrm{poly}(d)$ neurons and such that,
			\[
			|N(x) - h(x)| <  \eps, \text{ for every } x \in [0,1]^d.
			\]
		\end{itemize}
	\end{theorem}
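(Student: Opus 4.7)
The plan is to lift a hard monotone Boolean function to the cube and run the separation through. I start from a monotone Boolean function $f:\{0,1\}^d\to\{0,1\}$ exhibiting a super-polynomial gap between monotone and general Boolean circuit complexity: $f$ is computable by a general Boolean circuit of size $\mathrm{poly}(d)$, yet every monotone circuit computing $f$---even one whose gates are arbitrary monotone Boolean functions, which in particular covers positive-weight threshold gates---requires at least $e^{\Omega(\log^2 d)}$ gates. Such functions are provided by the classical monotone circuit lower bounds of Razborov and Tardos, together with extensions that broaden the monotone gate basis (Haken--Cook, Pudl\'ak, Rosenbloom).

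Given such $f$, I define
\[
h(x) \;=\; f\bigl(\mathbf{1}[x_1\geq 1/2],\dots,\mathbf{1}[x_d\geq 1/2]\bigr)
\quad\text{for } x\in[0,1]^d.
\]
Since coordinatewise thresholding and $f$ are both monotone, $h$ is monotone on $[0,1]^d$ and takes values in $\{0,1\}$; moreover $h$ agrees with $f$ on the Boolean vertices $\{0,1\}^d$.

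For the lower bound, suppose a monotone threshold network $N$ satisfies $|N(x)-h(x)|<1/2$ on all of $[0,1]^d$. Restricting to $x\in\{0,1\}^d$ gives $|N(x)-f(x)|<1/2$, and since $f$ is $\{0,1\}$-valued we can read it off as $f(x)=\mathbf{1}[N(x)\geq 1/2]$. Appending one more threshold gate to $N$ thus produces a monotone threshold circuit of size $|N|+1$ that computes $f$ exactly on Boolean inputs, so the circuit lower bound from the first step forces $|N|\geq e^{\Omega(\log^2 d)}$.

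For the upper bound, I take a general Boolean circuit $C$ for $f$ of size $\mathrm{poly}(d)$ and simulate each of its gates by an $O(1)$-size threshold subcircuit with (possibly negative) real weights, producing a general threshold circuit of size $\mathrm{poly}(d)$ that computes $f$ on Boolean inputs. Pre-composing it with a first layer of $d$ threshold gates realizing $x_i\mapsto \mathbf{1}[x_i\geq 1/2]$ yields a general threshold network that evaluates $h(x)$ \emph{exactly}, hence $|N(x)-h(x)|=0<\eps$ for every $\eps>0$.

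The main obstacle is the first step: locating a monotone Boolean function whose quasi-polynomial lower bound is robust enough to apply to a circuit model that contains positive-weight threshold gates, not just AND/OR. Classical monotone lower bounds, such as Razborov's for $\mathrm{CLIQUE}$, are not automatically closed under such broader simulations because a single monotone threshold gate can collapse many AND/OR gates, so the only real work is to quote an extension (in the monotone-real or monotone-threshold circuit model) that supplies the $e^{\Omega(\log^2 d)}$ bound. Once the appropriate $f$ is identified, the two directions above are essentially mechanical.
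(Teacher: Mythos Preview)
Your approach is correct and, for the theorem as stated, simpler than the paper's. The paper chooses $f$ to be the bipartite perfect-matching indicator and extends it to $[0,1]^{n\times n}$ via the \emph{multilinear} extension $m({\bf p})=\mathbb{P}(G({\bf p})\text{ has a perfect matching})$, obtaining a smooth $n$-Lipschitz function. This buys regularity---emphasized in the paper's discussion though not required by the theorem statement---but forces a nontrivial upper-bound argument: sample random graphs, run a matching algorithm, and derandomize by a union bound over bit-truncated inputs. Your threshold extension $h(x)=f(\mathbf{1}[x_1\ge 1/2],\dots)$ is discontinuous, so the upper bound collapses to exact computation by a $\mathrm{poly}(d)$-size circuit, exactly as you say.

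On the lower bound, both proofs coincide once restricted to Boolean inputs: a monotone threshold network $N$ with $|N-h|<1/2$ yields, after one extra threshold, a monotone threshold circuit computing $f$. The only substantive step is the one you flag as the obstacle: transferring the $e^{\Omega(\log^2 d)}$ bound from monotone \textsc{And}/\textsc{Or} circuits to monotone \emph{threshold} circuits with unbounded fan-in. Be careful with your phrasing ``gates are arbitrary monotone Boolean functions'': without a fan-in restriction a single such gate computes $f$. The monotone-real lower bounds you cite (Pudl\'ak, Haken--Cook) are fan-in~$2$ and are mostly stated for functions not known to lie in $\mathrm{P}$, so invoking them directly for an $f$ with small general circuits needs care. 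The paper closes this gap by citing a result of Beimel and Weinreb: any positive-weight threshold gate on $r$ inputs has a $\mathrm{poly}(r)$-size monotone De~Morgan circuit. Replacing each gate of a size-$s$ monotone threshold circuit thus gives a monotone De~Morgan circuit of size $\mathrm{poly}(s)$, and Razborov's $n^{\Omega(\log n)}$ bound for bipartite matching (which \emph{is} in $\mathrm{P}$) forces $s\ge e^{\Omega(\log^2 d)}$. With that simulation lemma in hand, your argument is complete.
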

	
	The function $h$ we use is the harmonic extension of a graph invariant function, introduced by \'Eva Tardos in \cite{tardos1988gap}. The Tardos function and its properties build upon the seminal works of Razborov \cite{razborov1985lower}, and Alon and Boppana \cite{alon1987monotone}. The mentioned works constitute a highly influential line of work, about the limitation of computation using monotone circuits. This line of work culminated in the introduction of the monotone Tardos function, which is computable by a De-Morgan Boolean circuit of polynomial size\footnote{The size of a circuit is the number of gates it has.}, yet requires an exponential number of gates in any representation by a monotone De Morgan circuit without negations. 
	
	To make use of these foundational results in circuit complexity theory, we shall need to make a connection with neural networks. As part of the proof of Theorem \ref{thm:monotoneeffic}, we introduce two reductions going from, and to, neural networks and Boolean circuits. Our reductions are quite general and may be of independent interest.
	\section{Related work}
	We are unaware of previous work studying the interpolation problem for monotone data sets using monotone networks. There is extensive research regarding the size and depth needed for general data sets and networks to achieve interpolation~\cite{zhang2021understanding,bubeck2020network,daniely2020neural,vershynin2020memory} starting with the seminal work of Baum~\cite{baum1988capabilities}. Known constructions of neural networks achieving interpolations are non-monotone: they may result in negative parameters even for monotone data sets. 
	
	Several works have studied approximating monotone (real) functions over a bounded domain using a monotone network. Sill~\cite{sill1997monotonic} provides a construction of a monotone network (all parameters are non-negative) with depth $3$ where the first layer consists of linear units divided into groups, the second layer consists of $\max$ gates where each group of linear units of the first layer is fed to a different gate and a final gate computing the minimum of all outputs from the second layer. It is proven in ~\cite{sill1997monotonic} that this class of networks can approximate every monotone function over $[0,1]^d$. We remark that this is very different from the present work's setting. First, using both $\min$ and $\max$ gates in the same architecture with positive parameters does not fall into the modern paradigm of an activation function.
	Moreover, we are not aware of works prior to Theorem \ref{thm:universalapprox} that show how to implement or approximate $\min$ and $\max$ gates, with arbitrary fan-ins, using constant depth \emph{monotone networks}\footnote{There are constructions of depth-3 threshold circuits with discrete inputs that are given $m$ numbers each represented by $n$ bits and output the (binary representation of the) maximum of these numbers~\cite{siu1991power}. This setting is different from ours, where the inputs are real numbers.}. Finally,~\cite{sill1997monotonic} focuses on approximating monotone functions and does not consider the monotone interpolation problem studied here.   
	
	Later, the problem of approximating arbitrary monotone functions with networks having non-negative parameters using more standard activation functions such as thresholds or sigmoids has been studied in~\cite{daniels2010monotone}. In particular~\cite{daniels2010monotone} gives a recursive construction showing how to approximate in the $\ell_{\infty}$ norm an arbitrary monotone function using a network of depth $d+1$ ($d$-hidden layers) with threshold units and non-negative parameters. In addition~\cite{daniels2010monotone} provides a construction of a monotone function $g:[0,1]^2 \rightarrow \mathbb{R}$ that cannot be approximated in the $\ell_{\infty}$ norm with an error smaller than $1/8$ by a network of depth $2$ with sigmoid activation and non-negative parameters, regardless of the number of neurons in the network. Our Lemma \ref{lem:impossible} concerns networks with threshold gates and applies to arbitrary dimensions larger than $1$. It can also be extended to provide monotone functions that monotone threshold networks cannot approximate with only two layers.
	
	Lower bounds for monotone models of computation have been proven for a variety of models~\cite{de2022guest}, including monotone De Morgan\footnote{Circuits with AND as well as OR gates without negations.} circuits~\cite{razborov1985lower,harnik2000higher,raz1997separation,garg2018monotone}, monotone arithmetic circuits and computations~\cite{chattopadhyay2021lower,yehudayoff2019separating,jerrum1982some}, which correspond to polynomials with non-negative coefficients, and circuits with monotone real gates~\cite{pudlak1997lower,hrubevs2018note} whose inputs and outputs are \emph{Boolean}. %One difference between our separation result between monotone and non-monotone networks and these works is that we consider \emph{real} differentiable functions as opposed to Boolean functions. Furthermore, we need functions that can be computed by an unconstrained neural network of \emph{polynomial size}. In contrast, known lower bounds for circuits with real monotone gates apply to Boolean functions, believed to be intractable (require a super polynomial number of gates) to compute even with non-monotone circuits (e.g., deciding if a graph contains a clique of a given size). 
 Finally, our model of computation of neural networks with threshold gates differs from arithmetic circuits~\cite{shpilka2010arithmetic} which use gates that compute polynomial functions.
	
	To achieve our separation result, we begin with a Boolean function $m$, which requires super polynomial-size to compute with Boolean circuits with monotone threshold gates but can be computed efficiently with arbitrary threshold circuits: the existence of $m$ follows from~\cite{razborov1985lower}.  
	Thereafter, we show how to smoothly extend $m$ to have domain $[0,1]^d$ while preserving monotonicity. Proving lower bounds for neural networks with a continuous domain by extending a Boolean function $f$ for which lower bounds are known to a function $f'$ whose domain is $[0,1]^d$ has been done before~\cite{vardi2021size,yarotsky2017error}. However, the extension method in these works does not yield a function that is monotone. Therefore, we use a different method based on the multi-linear extension. 
	
	\section{Preliminaries and notation}
	We work on $\mathbb{R}^d$, with the Euclidean inner product $\langle \cdot, \cdot \rangle$. For $k \in \mathbb{N}$, we denote $[k] = \{1,2,\dots, k\}$ and use $\{e_i\}_{i \in [d]}$ for standard unit vectors in $\mathbb{R}^d$. That is, for $i \in [d]$, 
	\[
	e_i = (\underbrace{0,\dots,0}_{i-1 \text{ times}},1,\underbrace{0,\dots,0}_{d-i \text{ times}} ).
	\]
	For $x \in \mathbb{R}^d$ and $i \in [d]$, we write $[x]_i := \langle x, e_i \rangle$, the $i^{\mathrm{th}}$ coordinate of $x$.
	
	With a slight abuse of notation, when the dimension of the input changes to, say, $\mathbb{R}^k$, we will also use $\{e_i\}_{i\in [k]}$ to stand for standard unit vectors in $\mathbb{R}^k$. To avoid confusion, we will always make sure to make the dimension explicit.
	
	A neural network of depth $L$ is a function $N:\mathbb{R}^d \to \mathbb{R}$, which can be written as a composition,
	\[
	N(x) = N_L(N_{L-1}(\dots N_2(N_1(x))\dots),
	\]
	where for $\ell \in [L]$, $N_\ell:\mathbb{R}^{d_\ell} \to \mathbb{R}^{d_{\ell+1}}$ is a layer. We set $d_1 = d$, $d_{L+1} = 1$ and term $d_{\ell+1}$ as the width of layer $\ell$.
	Each layer is composed of single neurons in the following way: For $i \in [d_{\ell+1}]$, $[N_\ell(x)]_i = \sigma(\langle w^\ell_i, x\rangle + b^\ell_i) $
	where $\sigma:\mathbb{R} \to \mathbb{R}$ is the activation function, $w^\ell_i \in \mathbb{R}^{d_\ell}$ is the weight vector, and $b^\ell_i \in \mathbb{R}$ is the bias.
	The only exception is the last layer which is an affine functional of the previous layers, 
	\[
	N_L(x) = \langle w^L,x \rangle + b^L,
	\]
	for a weight vector $w^L \in \mathbb{R}^{d_L}$ and bias $b^L \in \mathbb{R}.$
	
	Suppose that the activation function is monotone. We say that a network $N$ is monotone, if, for every $\ell \in [L]$ and $i \in [d_{\ell + 1}]$, the weights vector $w_i^\ell$ has all positive coordinates. In other words,
		\[
		[w_i^{\ell}]_j \geq 0, \text{ for every } j \in [d_\ell].
		\]
	
	\section{A counter-example to expressibility}
	In this section, for every $d \geq 2$, we will construct a monotone data set such that any $2$-layered monotone threshold network cannot interpolate, thus proving Lemma \ref{lem:impossible}. Before proving the Lemma it will be instructive to consider the case $d = 2$. Lemma \ref{lem:impossible} will follow as a generalization of this case\footnote{	The 2D example here as well its generalization for higher dimensions can be easily adapted to give an example of a monotone function that cannot be approximated in $\ell_2$ by a depth-two monotone threshold network.}.  
	
	Recall that $\sigma(x) = {\bf 1}(x\geq 0)$ is the threshold function and consider the monotone set, 
		\begin{align*}
			x_1 &= (2,0),\ y_1 = 0\\
			x_2 &= (0,2),\ y_2 = 0\\
			x_3 &= (1,1),\ y_3 = 1.
		\end{align*}
	Assume towards a contradiction that there exists an interpolating monotone network 
	$N(x) := \sum\limits_{m =1}^ra_m\sigma\left(\langle x, w_m\rangle - b_m\right).$
	Set,
	\begin{align*}
	I &=\{m \in [r]| \langle x_3, w_m\rangle \geq b_m\}\\
	&= \{ m \in [r]| [w_m]_1 + [w_m]_2  \geq b_m\}.
	\end{align*}
	The set $I$ is the set of all neurons that are active on $x_3$, and, since $N(x_3) = 1$, $I$ is non-empty. We also define, 
	\begin{align*}
		I_1 &= \{m \in I| [w_m]_1 \geq [w_m]_2\}\\ 
		I_2 &= \{m \in I| [w_m]_2 \geq [w_m]_1\}.
	\end{align*}
	It is clear that $I_1 \cup I_2 = I.$ 
	Observe that for $m \in I_1$, by monotonicity, we have 
		\begin{equation*} 
			\langle x_1, w_m\rangle = 2[w_m]_1 \geq [w_m]_1 + [w_m]_2 = \langle x_3, w_m \rangle.
		\end{equation*}
	Since the same also holds for $m \in I_2$ and $x_2$, we get
	\begin{align} \label{eq:compare}
		N(x_1) + N(x_2) &\geq \sum\limits_ {m \in I_1}a_m \sigma\left(\langle x_1, w_m\rangle - b_m\right)\nonumber\\ &\ \ \ + 
		\sum\limits_ {m \in I_2}a_m \sigma\left(\langle x_2, w_m\rangle - b_m\right)\nonumber\\
		&\geq \sum\limits_ {m \in I_1} a_m \sigma\left(\langle x_3, w_m\rangle - b_m\right)\nonumber \\&\ \ \ + \sum\limits_ {m \in I_2}a_m \sigma\left(\langle x_3, w_m\rangle - b_m\right)\nonumber\\
		&\geq \sum\limits_ {m \in I} a_m \sigma\left(\langle x_3, w_m\rangle - b_m\right)\nonumber\\
		&= N(x_1) = 1.
	\end{align}
	Hence, either $N(x_1) \geq \frac{1}{2}$ or $N(x_2) \geq \frac{1}{2}$.\\
	
	With the example of $d = 2$ in mind, we now prove Lemma \ref{lem:impossible}.
	\begin{proof}[Proof of Lemma \ref{lem:impossible}]
		Consider the following monotone set of $d+1$ data points in $\mathbb{R}^d$ with $d \geq 2$. For $i \in [d], x_i = d\cdot e_i$, is a vector whose $i^{\mathrm{th}}$ coordinate is $d$ and all other coordinates are $0$, and set $y_i = 0$. We further set $x_{d+1}=(1,\dots,1)$ (the all $1$'s vector) with $y_{d+1} = 1$. 
		
		Suppose towards a contradiction there is a monotone depth-$2$ threshold network 
		\[
		N(x)=\sum_{m=1}^{r}a_{m}\sigma(\langle x,w_{m}\rangle-b_{m}),
		\]
		with $N(x_{d+1})=1$ and for every $i\in[d], N(x_i) = 0$.
		We prove the result while assuming that the bias of the output layer is $0$. Since the bias just adds a constant to every output it is straightforward to take it into account.\\
		
		Denote,
		\[
		I := \left\{m \in [r]| \sum_{i=1}^k[w_{m}]_i\geq b_{m}\right\}.
		\]
		Since $N(x_{d+1})=1$, we have that $I$ is non-empty. For $j \in [d]$, let 
		\[
		I_j: =\left\{m \in I|[w_m]_j=\max\{[w_m]_1,\dots,[w_{m}]_d\}\right\}.
		\]
		Clearly $I=\bigcup\limits_{j=1}^d I_j$ and we can assume, with no loss of generality, that this is a disjoint union. 
		Now, by following the exact same logic as in \eqref{eq:compare},
		\begin{align*}
			\sum_{i=1}^dN(x_i) &\geq \sum_{m \in I_1}a_{m}\sigma(\langle x_1,w_{m}\rangle-b_{m})+ \ldots \\
			&\ \ \ +\sum_{m \in I_d}a_{m}\sigma(\langle x_d,w_{m}\rangle-b_{m})\\
			& \geq \sum\limits_{m \in I}  a_{m}\sigma(\langle x_{d+1},w_{m}\rangle-b_{m}) =1
		\end{align*}
		Therefore there exists $j \in [k]$ with $N(x_j) \geq \frac{1}{d} > 0$, which is a contradiction.
	\end{proof}
	
	\section{Four layers suffice with threshold activation} \label{sec:construction}
	Let $(x_i,y_i)_{i=1}^n\in (\mathbb{R}^d\times \mathbb{R})^n$ be a monotone data set, and assume, with no loss of generality, 
	\begin{equation} \label{eq:labelorder}
		0\leq y_1 \leq y_2 \leq\dots \leq y_n.
	\end{equation}
	If, for some $i, i'\in [n]$ with $i \neq i'$ we have $y_i = y_{i'}$, and $x_i\leq x_{i'}$, we will assume $i < i'$. Other ties are resolved arbitrarily. Note that the assumption that the $y_i$'s are positive holds without loss of generality, as one can always add a constant to the output of the network to handle negative labels.
	
	This section is dedicated to the proof of Theorem \ref{thm:construction}, and we will show that one can interpolate the above set using a monotone network with $3$ hidden layers. The first hidden layer is of width $dn$ and the second and third of width $n$.
	
	Throughout we shall use $\sigma(t) = {\bf1}(t\geq 0)$, for the threshold function. For $\ell \in \{1,2,3\}$ we will also write $(w_i^\ell, b_i^\ell)$ for the weights of the $i^{\mathrm{th}}$ neuron in level $\ell$. We shall also use the shorthand, 
	\[
	\sigma_i^\ell(x) = \sigma(\langle x, w_i^\ell\rangle-b_i^\ell).
	\]
	
	We first describe the first two layers. The second layer serves as a monotone embedding into $\RR^n$. We emphasize this fact by denoting the second layer as $E:\mathbb{R}^d \to \mathbb{R}^n$, with $i^{\mathrm{th}}$ coordinate given by,  
	\[
	[E(x)]_i = \sigma_i^2(N_1(x)),
	\]
	where $[N_1(x)]_j = \sigma_j^1(x)$, for $j = 1,\dots, nd$, are the outputs of the first layer.
	\paragraph{First hidden layer} The first hidden layer has $dn$ units. 
	Let $e_i$ be the $i^{\mathrm{th}}$ standard basis vector in $\mathbb{R}^d$ and, for $j = 1,\dots, dn$ define 
	\[
	\sigma^1_j(x) := \sigma\left(\langle x, e_{(j\text{ mod } d) +1}\rangle -  \langle x_{\lceil \frac{j}{d}\rceil}, e_{(j \text{ mod } d) +1}\rangle\right).
	\]
	In other words, $w^1_j = e_{(j\text{ mod } d) +1}$ and $b^1_j = \langle x_{\lceil \frac{j}{d}\rceil}, e_{(j \text{ mod } d) +1}\rangle$ (the addition of $1$ offsets the fact that mod operations can result in $0$). To get a feeling of what the layer does, suppose that $j \equiv r \mod d$, then unit $j$ is activated on input $x$ iff the $(r+1)^{\mathrm{th}}$ entry of $x$ is at least the $(r+1)^{\mathrm{th}}$ entry of $x_{\lceil \frac{j}{d}\rceil}.$

	\paragraph{Second hidden layer} The second layer has $n$ units. For $j = 1,\dots, nd$, with a slight abuse of notation we now use $e_j$ for the $j^{\mathrm{th}}$ standard basis vector in $\mathbb{R}^{nd}$ and define unit $i =1,\dots,n$, $\sigma_i^2:\RR^{nd}\to \RR$, by
	\[
	\sigma_i^2(y) = \sigma\left(\sum_{r = 1}^d \langle y, e_{d(i-1) + r}\rangle -d \right).
	\]
	Explicitly, $w^2_i = \sum_{r = 1}^d e_{d(i-1) + r}$ and $b^2_i = d$.
	With this construction in hand, the following is the main property of the first two layers.
	\begin{lemma} \label{lem:montoneembed}
		Let $i = 1,\dots,n$. Then, $[E(x)]_i = 1$ if and only if $x \geq x_i$. Otherwise, $[E(x)]_i = 0$.
	\end{lemma}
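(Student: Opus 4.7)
The plan is to unpack the two-layer definition and check that the second-layer unit indexed by $i$ fires exactly when the input $x$ dominates $x_i$ coordinate-wise.

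First I would re-index the first layer's neurons by pairs $(i,r)\in [n]\times[d]$ instead of by $j\in[dn]$: as $j$ ranges over $\{d(i-1)+1,\ldots,d(i-1)+d\}$, the quantity $\lceil j/d\rceil$ is constant and equal to $i$, while $(j\bmod d)+1$ ranges over all of $[d]$ (since the value $r=d$ produces $j\bmod d=0$, which maps to coordinate $1$, while $r<d$ produces coordinate $r+1$). Hence, after re-indexing, the first layer contains exactly one unit for each pair $(i,r)$, whose output on input $x$ is
\[
\sigma\!\left(\langle x,e_r\rangle-\langle x_i,e_r\rangle\right)=\mathbf{1}\!\left([x]_r\geq[x_i]_r\right).
\]

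Next I would analyze the second layer. By construction, $\sigma_i^2$ takes the inputs coming from coordinates $d(i-1)+1,\ldots,d(i-1)+d$ of the first layer, sums them, and compares to the threshold $d$. By the re-indexing above these are exactly the $d$ first-layer units associated with the point $x_i$, one per coordinate. Since each of these $d$ inputs is a $\{0,1\}$-valued threshold output, their sum is at most $d$ and equals $d$ if and only if all of them equal $1$. Therefore
\[
[E(x)]_i=\sigma_i^2(N_1(x))=\mathbf{1}\!\left([x]_r\geq[x_i]_r\text{ for every }r\in[d]\right),
\]
which by the definition of the coordinate-wise partial order is $\mathbf{1}(x\geq x_i)$.

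I do not anticipate any real obstacle; the only thing to keep track of is the off-by-one bookkeeping introduced by the ``$+1$'' offset in $(j\bmod d)+1$, which could be mishandled in the boundary case $j\equiv 0\pmod d$. Once one checks that each residue class mod $d$ is hit exactly once as $r$ varies over $[d]$, the proof collapses to the observation that an AND of $d$ thresholds is realized by summing $d$ indicator outputs and thresholding at $d$.
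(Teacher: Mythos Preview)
Your proposal is correct and follows essentially the same approach as the paper: both arguments observe that the second-layer unit $\sigma_i^2$ sums the $d$ first-layer indicators associated with $x_i$ and thresholds at $d$, so it fires if and only if every coordinate comparison $[x]_r\geq[x_i]_r$ holds. You are in fact more careful than the paper about the off-by-one bookkeeping in $(j\bmod d)+1$; the paper's proof writes ``$\sigma^1_{d(i-1)+r}(x)=1$ is equivalent to $[x]_r\geq[x_i]_r$'' without tracking the cyclic shift, relying implicitly on the same observation you spell out.
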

	\begin{proof}
		By construction, we have $[E(x)]_i = 1$ if and only if $\sum_{r=1}^d \sigma^1_{d(i-1) + r}(x) \geq d$.
		For each $r \in [d]$, $\sigma^1_{d(i-1)+r}(x) \in \{0,1\}$. Thus, $[E(x)]_i = 1$ if and only if, for every $r \in [d]$, $\sigma^1_{d(i-1)+r}(x) = 1$. But $\sigma^1_{d(i-1)+r}(x) = 1$ is equivalent to $[x]_r \geq [x_i]_r$. Since this must hold for every $r \in [d]$, we conclude $x \geq x_i$.
	\end{proof}
	The following corollary is now immediate.
	\begin{corollary} \label{cor:embedprop}
		Fix $j \in [n]$ and let $i \in [n]$. 
		\begin{itemize}
			\item If $j < i$, then $[E(x_j)]_i = 0$.
			\item If $j \geq i$, then there exists $i' \geq i$ such that $[E(x_j)]_{i'} = 1$.
		\end{itemize}
	\end{corollary}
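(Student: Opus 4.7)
The statement is claimed as an immediate corollary of Lemma~\ref{lem:montoneembed}, so my plan is to reduce each bullet to a statement about the partial order and then invoke the ordering conventions on the data set from~\eqref{eq:labelorder} together with the tie-breaking rule.

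For the first bullet, suppose $j<i$ and, towards a contradiction, that $x_j\geq x_i$. Since distinct data points are distinct as vectors, $x_j\neq x_i$, so in particular $x_i\leq x_j$ in the partial order. Monotonicity of the data set then yields $y_j\geq y_i$, which combined with $y_j\leq y_i$ (from $j<i$ and the assumed ordering) forces $y_j=y_i$. But the tie-breaking convention stated just before the construction says that whenever $y_i=y_{i'}$ and $x_i\leq x_{i'}$ (with $i\neq i'$) we have $i<i'$; applied with the roles $i\mapsto i$, $i'\mapsto j$ this gives $i<j$, contradicting $j<i$. Hence $x_j\not\geq x_i$, and Lemma~\ref{lem:montoneembed} yields $[E(x_j)]_i=0$.

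For the second bullet, it suffices to take $i'=j$: we have $i'=j\geq i$ by hypothesis, and trivially $x_j\geq x_j=x_{i'}$, so Lemma~\ref{lem:montoneembed} gives $[E(x_j)]_{i'}=1$.

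There is no real obstacle here; the only subtlety is remembering to handle the equality case $y_j=y_i$ in the first bullet, which is precisely what the tie-breaking rule was set up to rule out. Everything else is a direct translation through Lemma~\ref{lem:montoneembed}.
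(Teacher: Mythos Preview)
Your proof is correct and follows the same approach as the paper: both items are reduced to Lemma~\ref{lem:montoneembed}, with the second handled by taking $i'=j$. The only difference is cosmetic: the paper asserts $x_j \ngeq x_i$ for $j<i$ directly ``by the ordering of the labels~\eqref{eq:labelorder},'' whereas you spell out the contradiction via monotonicity and the tie-breaking rule --- which is exactly the justification the paper is implicitly relying on.
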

	\begin{proof}
		For the first item, if $j < i$, by the ordering of the labels \eqref{eq:labelorder}, we know that $x_j \ngeq x_i$. By Lemma \ref{lem:montoneembed},  $[E(x_j)]_i = 0$.
		
		For the second item, by construction, $[E(x_j)]_j = 1$. Since $j \geq i$, the claim concludes.
	\end{proof}
	\paragraph{The third hidden layer}
	The third layer contains $n$ units with weights given by
	\[
	w^3_i = \sum_{r=i}^n e_r \text{ and } b^3_i = 1.
	\]
	Thus,
	\begin{equation} \label{eq:thirdlay}
		\sigma_i^3(E(x)) = \sigma\left(\sum_{r=i}^n [E(x)]_r - 1\right).
	\end{equation}
	\begin{lemma} \label{lem:thirdlayerprop}
		Fix $j \in [n]$ and let $i \in [n]$. $\sigma_i^3(E(x_j)) =  1$ if $j \geq i$ and $\sigma_i^3(E(x_j)) =  0$ otherwise.
	\end{lemma}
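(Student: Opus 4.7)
The plan is to derive this lemma as a direct consequence of Corollary \ref{cor:embedprop}, using only the fact that every coordinate of $E(x)$ is a threshold output and hence lies in $\{0,1\}$.

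First I would unpack the definition of $\sigma_i^3$ from \eqref{eq:thirdlay} and observe that since each $[E(x_j)]_r \in \{0,1\}$, the argument $\sum_{r=i}^n [E(x_j)]_r - 1$ is a non-negative integer minus $1$. Consequently, $\sigma_i^3(E(x_j)) = 1$ if and only if at least one index $r \in \{i,\dots,n\}$ satisfies $[E(x_j)]_r = 1$, and $\sigma_i^3(E(x_j)) = 0$ otherwise.

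Next I would split into the two cases. For $j \geq i$, the second item of Corollary \ref{cor:embedprop} gives an index $i' \geq i$ with $[E(x_j)]_{i'} = 1$, so the sum $\sum_{r=i}^n [E(x_j)]_r$ is at least $1$ and the threshold fires. For $j < i$, every index $r$ appearing in the sum satisfies $r \geq i > j$, hence $r > j$; the first item of Corollary \ref{cor:embedprop} (applied with the roles of $i$ and $j$ swapped, using $j < r$) then yields $[E(x_j)]_r = 0$ for each such $r$, so the sum is $0$ and the threshold does not fire.

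There is essentially no obstacle here: the lemma is a bookkeeping step that converts the pointwise information about $E(x_j)$ provided by Corollary \ref{cor:embedprop} into a single bit of output via the prefix-sum thresholds of layer three. The only thing to be careful about is that the indexing in the sum $\sum_{r=i}^n$ covers exactly the tail starting at $i$, so that the distinction between $j < i$ (entire tail is zero) and $j \geq i$ (the $j$-th coordinate itself lies in the tail and equals $1$) aligns precisely with the desired output.
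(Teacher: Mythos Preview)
Your proposal is correct and follows essentially the same approach as the paper: both arguments invoke Corollary~\ref{cor:embedprop} to show that the tail $\sum_{r=i}^n[E(x_j)]_r$ vanishes when $j<i$ and is at least $1$ when $j\geq i$, then read off the value of the threshold from \eqref{eq:thirdlay}. Your version is slightly more explicit about why the $\{0,1\}$-valuedness of the coordinates reduces the question to whether some tail coordinate is nonzero, but the logic is identical.
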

	\begin{proof}
		By Corollary \ref{cor:embedprop}, $[E(x_j)]_r = 0$, for every $r > j$. In particular, if $i > j$, by \eqref{eq:thirdlay}, we get
			\[
			\sigma_i^3(E(x_j)) = \sigma(-1) = 0.
			\]
		On the other hand, if $j\geq i$, then by Corollary \ref{cor:embedprop}, there exists $i' \geq i$ such that $[E(x_j)]_{i'} = 1$, and
			\[
			\sigma_i^3(E(x_j)) = \sigma([E(x_j)]_{i'}-1) = 1.
			\]
	\end{proof}

	\paragraph{The final layer}
	The fourth and final layer is a linear functional of the output of the third layer. Formally, for $x \in \mathbb{R}^d$, the output of the network is, $N(x) = \sum\limits_{i=1}^n[w^4]_i\sigma^3_i(E(x))$, for some weights vector $w^4 \in \mathbb{R}$. To complete the construction we now define the entries of $w^4$, as $[w^4]_i = y_i -y_{i-1}$ with $y_{0} = 0$.
	We are now ready to prove Theorem \ref{thm:construction}.
	\begin{proof}[Proof of Theorem \ref{thm:construction}]
		Consider the function $N(x) =  \sum\limits_{i=1}^n[w^4]_i\sigma^3_i(E(x))$ described above. Clearly, it is a network with $3$ hidden layers. To see that it is monotone, observe that for $\ell \in \{1,2,3\}$ each $w_i^\ell$ is a sum of standard basis vectors, and thus has non-negative entries.
		The weight vector $w^4$ also has non-negative entries, since, by assumption, $y_i \geq y_{i-1}$.
		
		We now show that $N$ interpolates the data set $(x_j, y_j)_{j=1}^n$. Indeed, fix $j \in [n]$.
		By Lemma \ref{lem:thirdlayerprop}, we have
			\begin{align*}
				N(x_j) &= \sum\limits_{i=1}^n[w^4]_i\sigma^3_i(E(x_j))=\sum\limits_{i=1}^j[w^4]_i\\ &= \sum\limits_{i=1}^j(y_i - y_{i-1}) = y_j - y_{0} = y_j.
			\end{align*}
		The proof is complete, for the general case.\\
		
		To handle the case of totally-ordered $(x_i)_{i \in [n]}$, we slightly alter the construction of the first two layers, and compress them into a single layer satisfying Lemma \ref{lem:montoneembed}, and hence Lemma \ref{lem:thirdlayerprop}.
		
		The total-order of $(x_i)_{i\in [n]}$ implies the following: For every $i \in [n]$, there exists $r(i) \in [d]$, such that for any $j \in [n]$, $[x_i]_{r(i)} < [x_j]_{r(i)}$ if and only if $i < j$. In words, for every point in the set, there exists a coordinate that separates it from all the smaller points.
		We thus define $w_i^1 = e_{r(i)}$ and $b_i = 1$. 
		
		From the above, it is clear that 
			\[
			\sigma_i^1(x_j) = \sigma([x_j]_{r(i)} -1)= \begin{cases} 1& \text{if } i\leq j\\
				0& \text{if } i > j\end{cases}.
			\]
		\newline As in the general case, we define $E(x): \mathbb{R}^d \to \mathbb{R}^n$ by $[E(x)]_i = \sigma_i^1(x)$, and note that Lemma \ref{lem:montoneembed} holds for this construction of $E$ as well. The next two layers are constructed exactly like in the general case and the same proof holds.
		\end{proof}
	\subsection{Interpolating monotone networks are wide}
	The network we've constructed in Theorem \ref{thm:monotone} uses $(d+2)n$ neurons to interpolate a monotone data set of size $n$. One may wonder whether this can be improved. It turns out, that up to the dependence on $d$, the size of our network is essentially optimal. We now prove Lemma \ref{lem:lowerbound}.
	\begin{proof}[Proof of Lemma \ref{lem:lowerbound}]
		Let $(x_i,y_i)_{i \in [n]} \in \left(\mathbb{R}^d\times \mathbb{R}\right)^n$ be a monotone data set, such that for every $1 \leq i \leq n-1$, $x_i \leq x_{i+1}$ and $y_i \neq y_{i+1}$.
		Suppose that the first layer of $N$ has $k$ units and denote it by $N_1:\mathbb{R}^d \to \mathbb{R}^k$. Now, let $1\leq i < j \leq n$. 
		Since, every unit is monotone, and since $x_i \leq x_j$, we have for $\ell \in [k]$ the implication 
		$$[N_1(x_i)]_\ell \neq 0 \implies [N_1(x_i)]_\ell = [N_1(x_j)]_\ell.$$
		Denote $I_i = \{ \ell \in [k]| [N_1(x_i)]_\ell \neq 0\}$. The above shows the following chain condition,
		$$I_1 \subset I_2 \subset \dots \subset I_n.$$
		Suppose that $k < n$, then as $\{I_i\}_{i=1}^n$ is an ascending chain of subsets in $[k]$, necessarily there exists an $i < n$, such that $I_i = I_{i+1}$, which implies $N_1(x_i) = N_1(x_{i+1})$. We conclude that $N(x_i) = N(x_{i+1})$, which cannot happen as $y_i \neq y_{i+1}$. Hence, necessarily $k > n$.
	\end{proof}

	\begin{remark}
		It is known that general threshold networks (no restriction on the weights) can memorize $n$ points in $\RR^d$ using $O(\sqrt{n}+f(\delta))$ neurons where $f$ is a function depending on the minimal distance between any two of the data points~\cite{vershynin2020memory,rajput2021exponential}. Hence the lower bound in Lemma~\ref{lem:lowerbound} shows that there exist monotone data sets such that interpolation with monotone networks entails a quadratic blowup in the number of neurons of the network. In particular, observe that the lower bound in Lemma~\ref{lem:lowerbound} holds even if the data set is well-separated.
	\end{remark}
	\subsection{Universal approximation} \label{sec:universal}
	We now show that our interpolation scheme can be used to approximate any continuous monotone function, which is Theorem \ref{thm:universalapprox}.
	
	\begin{proof}[Proof of Theorem \ref{thm:universalapprox}]
		Since $f$ is continuous and $[0,1]^d$ is compact, $f$ is uniformly continuous.
		Hence, there exists $\delta > 0$, such that
		\[
		\|x - y\| \leq \delta \implies |f(x) - f(y)| \leq \varepsilon, \text{ for all } x,y\in[0,1]^d.
		\]
		Set $\delta_d = \frac{\delta}{\sqrt{d}}$ and consider the grid,
		\[
		G_\delta = (\delta_d\mathbb{Z})^d \cap [0,1]^d.
		\]
		I.e. $G_\delta$ is a uniform grid of points with spaces of width $\delta_d$.
		Define now a monotone data set $(x_j, f(x_j))_{x_j \in G_\delta}$. By Theorem \ref{thm:construction} there exists a network $N$, such that $N(x_j) = f(x_j)$ for every $x_j \in G_\delta$. 
		We claim that this is the approximating network. Let $x \in [0,1]^d$ and let $x_-$ (resp. $x_+$) be the closest point to $x$ in $G_\delta$ such that $x\geq x_-$ (resp. $x\leq x_+$).
		Observe that, $x_-$ and $x_+$ are vertices of a sub-cube in $[0,1]^d$ of side length $\delta_d$. Thus, $\|x_- - x_+\| \leq \sqrt{d}\delta_d=\delta.$
		Now, since both $N$ and $f$ are monotone,
    	\begin{align*}
    	    |N(x) - f(x)| 
    	    &\leq \max\left(|N(x_+) - f(x_-)|,|f(x_+) - N(x_-)| \right)\\
    	    &= |f(x_+) - f(x_-)| \leq \varepsilon.
    	\end{align*}

		Assume now that $f$ is $L$-Lipschitz and Let us now estimate the size of the network. According to Theorem \ref{thm:construction}, the network has $(d+2)|G_\delta|$ neurons. Since $f$ is $L$-Lipschitz, one can control the uniform continuity parameter and take $\delta = \frac{\eps}{L}$. Hence,
		\[
		|G_\delta| = \left(\frac{1}{\delta_d}\right)^d = \left(\frac{L\sqrt{d}}{\eps}\right)^d.
		\]
	\end{proof}

	\section{An exponential separation between the size of monotone and arbitrary threshold networks}
	By the universal approximation result for monotone threshold networks, Theorem \ref{thm:universalapprox}, we can approximate monotone functions by monotone networks, arbitrarily well. In this section, we focus on the efficiency of the approximation, in terms of the number of neurons used. Are there functions such that monotone networks approximating them provably require a much larger size than networks that are allowed to have negative parameters? We show that the answer is positive when seeking an $\ell_{\infty}$-approximation smaller than $\eps$ for any $\eps \in [0,1/2).$
	
	Our proof of this fact builds on findings from the monotone complexity theory of Boolean functions. We begin by defining monotone circuits.
	
	\begin{definition}[Monotone Boolean circuits]
		A monotone circuit is a Boolean circuit that only uses AND and OR gates. To allow monotone circuits to represent functions with a range in some bounded interval of $\NN$ (rather than $\{0,1\})$, we will allow the output gate of the circuit to apply a linear function, with weights $1, 2, 2^2, 2^3, \dots$, to its input. In this case, the input bits for the output gate will be the bit representation of the output.
		
		Note that with definition if $C:\{0,1\}^d \to \NN$ is a monotone circuit, then $C(x) \leq C(y)$, whenever $x \leq y$, for $x, y \in \{0,1\}^d.$ 
	\end{definition}
	Specifically, we demonstrate a reduction, which goes in both ways and is compatible with the monotonicity constraint, between Boolean circuits and neural networks. The rest of the section is devoted to proving Theorem \ref{thm:monotoneeffic} and is defined into 3 parts:
		\begin{itemize}
			\item We first explain how to extend a Boolean function from the discrete cube $\{0,1\}^d$, the natural domain for circuits, to the solid cube $[0,1]^d$, where neural networks perform computations. 
			\item We then describe our two reductions. The first reduction constructs a monotone neural network from a monotone circuit, while the second reduction shows how to build a circuit from a general neural network.
			\item Finally we use known results about hard Boolean functions, along with our reductions to establish Theorem \ref{thm:monotoneeffic}. 
		\end{itemize}
	
	\subsection{The harmonic extension}
	Since Boolean circuits represent functions from the discrete cube, $f:\{0,1\}^d \to \NN$, for the reduction we need to have a way to extend such functions to the solid cube $[0,1]^d$. For our proof, we choose the harmonic, or multi-linear extension, defined as follows.
	
	\begin{definition}[Harmonic extension] \label{def:harmonic}
		Let $f:\{0,1\}^d \to \RR$ and $B_t$ a standard Brownian motion on $\RR^d$, and
		define the martingale $X_t$ by $$d[X_t]_i = [B_t]_i{\bf 1}_{[X_t]_i(1-[X_t]_i) > 0}.$$
		Notice that, since $X_t$ is a bounded martingale, it converges to a limit $X_\infty$, and that almost surely,
		$$X_\infty \in \{0,1\}^n.$$
		Define $\hat{f}:[0,1]^d \to \RR$, the harmonic extension of $f$ by,
		\begin{equation} \label{eq:harmext}
			\hat{f}(x) = \EE\left[f(X_\infty)|X_0 = x\right].
		\end{equation}
	\end{definition}
The martingale representation, given in the above definition, is a general solution to the Dirichlet problem and hence produces harmonic extensions in general domains. On the discrete hypercube, these solutions take a more familiar and intuitive form. Indeed, every function on $\{0,1\}^d$ takes the form of a multi-linear polynomial which allows to represent the harmonic extension as a multi-linear extension.

\begin{claim}
	Let $f:\{0,1\}^d \to \RR$ and write $f$ as a multi-linear polynomial $f(x) = \sum\limits_{I\subset [d]} a_Ix^I.$ Then, if $\hat{f}$ is the harmonic extension of $f$, we have,
	\begin{equation} \label{eq:multilinear}
		\hat{f}(x) = \sum\limits_{I\subset [d]} a_Ix^I, 
	\end{equation}
	where now we allow $x \in [0,1]^d$.
\end{claim}
\begin{proof}
	Observe that $\hat{f}$ is a harmonic function. This follows immediately from the fact that $X_t$ in \eqref{eq:harmext} has independent coordinates that solve the heat equation (see \cite[Section 9]{oksendal2003stochastic} for more details). On the other hand, when considered as a function on $\RR^d$, $\sum\limits_{I\subset [d]} a_Ix^I$ is also harmonic since it is a multi-linear polynomial.
	
	By construction, for every $x \in \{0,1\}^d$ we have $\hat{f}(x) = f(x) = \sum\limits_{I\subset [d]} a_Ix^I$. The claim now follows from the uniqueness of the harmonic extension, \cite[Theorem 9.1.1]{oksendal2003stochastic}.
\end{proof}

To supply some further intuition, beyond the equivalent definitions, it is instructive to think about graph properties. In this case, for a graph with vertex set $[n]$, the domain is the adjacency matrix of the graph $\{0,1\}^{n\times n}$ and $f:\{0,1\}^{n\times n} \to \{0,1\}$ checks if some property is satisfied; e.g. is the graph $3$-colorable or does it contain a perfect matching. The harmonic extension has a nice representation in this case. If $x \in [0,1]^{d\times d}$, and we define the inhomogeneous random graph $G(x)$ to have an edge between $i,j \in [n]$ with probability $[x]_{i,j}$, independently from all other edges.
The harmonic extension is then given by,
$$\hat{f}(x) = \PP\left(f(G(x)) =1\right).$$
Thus, if $f$ checks whether a graph contains a perfect matching, $\hat{f}$ encodes the probability that a randomly generated graph will contain a perfect matching.

Having established our method of extension for functions on $\{0,1\}^d$, we now record the following two properties.
	\begin{lemma} \label{lem:harmprop}
		Let $f:\{0,1\}^d \to \RR$ and let $\hat{f}: [0,1]^d\to \RR$ be its harmonic extension. Then,
		\begin{enumerate}
			\item For every $x \in \{0,1\}^d$, $\hat{f}(x) = f(x)$.
			\item If $f$ is monotone, so is $\hat{f}$.
			\item If $m \leq f(x) \leq M$, for some $m < M$ and every $x \in \{0,1\}^d$, then $\hat{f}$ is $(M-m)\sqrt{d}$-Lipschitz. 
		\end{enumerate}
	\end{lemma}
	\begin{proof}
		The first property is immediate from \eqref{eq:harmext}, the defining equation of $\hat{f}$. Indeed if $x \in \{0,1\}^d$ then, by definition of $X_t$, $\PP\left(X_\infty  = x\right) = 1$. We thus focus on the two other properties. For $i \in [d]$ define the discrete directional derivative 
		\begin{align*}
		    \partial_if(x):=& f([x]_1,\dots [x]_{i-1},1,[x]_{i+1},\dots,[x]_d) \\
		    &- f([x]_1,\dots [x]_{i-1},0,[x]_{i+1},\dots,[x]_d).
		\end{align*}
		%The discrete gradient of $f$ is defined by $\nabla f(x) = (\partial _i f(x))_{i=1}^n$. 
		Both stated properties follow from the following observation, which follows from \eqref{eq:multilinear}, the multi-linear representation of $\hat{f}$,
		$$\frac{d}{dx_i} \hat{f} = \widehat{\partial_i f}.$$
		On the left side of the identity, we have the Euclidean partial derivative of the harmonic extension, while on the right side, we are taking the harmonic extension of the discrete directional derivative.
		
		Equation \eqref{eq:harmext} shows that for every $y \in [0,1]^d$, $\widehat{\partial_i f}(y)$ can be written as a convex combination of $\{\partial_i f(x)\}_{x\in \{0,1\}^d}$. Now, if $f$ is monotone, then for every $x \in \{0,1\}^d$ and every $i \in [d]$, $\partial_i f(x) \geq 0$, which implies that, as a convex combination, $\widehat{\partial_i f}(y) \geq 0$ for every $y \in [0,1]^d$ and $i \in [d]$. The last property is equivalent to the monotonicity of $\hat{f}$, in the sense of \eqref{eq:montone}.
		
		For the last property, to show that $\hat{f}$ is Lipschitz, it will be enough to show that the norm of its (Euclidean) gradient is bounded by $(M-m)\sqrt{d}$ on $[0,1]^d$. If $m \leq f(x) \leq M$ for every $x \in \{0,1\}^d$, then by definition $|\partial_if(x)| \leq M-m$ for every $x \in \{0,1\}^d$. Again, as a convex combination, this implies $|\widehat{\partial_i f}(y)| \leq M-m$ for every $y \in [0,1]^d$. Applying this estimate to the gradient, we see,
		\begin{align*}
		    \|\nabla \hat{f}(y)\| &= \sqrt{\sum\limits_{i=1}^d |\frac{d}{dx_i}\hat{f}(y)|^2} \\
		    &\leq  \sqrt{\sum\limits_{i=1}^d (M-m)^2} =(M-m)\sqrt{d}.
		\end{align*}
	\end{proof}
	\subsection{Reductions between networks and circuits}
	\paragraph{A reduction from monotone circuits to monotone networks}
	Our aim here is to show that if some $f:\{0,1\}^d\to \NN$ cannot be computed by small monotone circuits, then it cannot be approximated by small monotone networks. We show this by demonstrating a reduction from monotone circuits to monotone networks. Given a monotone network that approximates $\hat{f}$ in $[0,1]^d$, we show how to construct a monotone circuit of comparable size to calculate $f$. The proof uses the following fact established in~\cite{beimel2006monotone}:
	\begin{theorem}[{\cite[Theorem 3.1]{beimel2006monotone}}]\label{thm:beimel}
		Let $f$ be a Boolean threshold function with $r$ inputs and non-negative weights $w_1, \ldots w_r$ and bias $T$.
		Namely $f(x_1, \ldots x_r)={\bf 1}(w_1x_1+\ldots w_rx_r\geq T)$. Then there is a monotone De Morgan circuit (that is, a circuit with AND as well as OR gates, but without NOT gates) computing $f$ with $O(r^k)$ gates, where $k$ is a fixed constant independent of the number of inputs $r$. 
	\end{theorem}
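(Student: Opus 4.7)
The plan is to reduce computing $f$ to a polynomial-size monotone combination of unweighted $k$-out-of-$n$ threshold functions, each of which is known to admit polynomial-size monotone circuits (via Valiant's amplification for majority, or via monotone sorting networks such as AKS). I will take this latter fact as a black box, since it is classical.

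The easy case of the reduction is small integer weights: if each $w_i$ is a non-negative integer with $\sum w_i = \mathrm{poly}(r)$, then replacing each input $x_i$ by $w_i$ duplicate copies turns $f$ into the unweighted threshold $T_{T}^{\sum w_i}$ on polynomially many inputs, yielding a monotone De Morgan circuit of size $\mathrm{poly}(r)$. The hard case, which is the real content of the statement, is that $w_i$ and $T$ are arbitrary non-negative reals of possibly unbounded magnitude, so naive duplication and the standard Muroga-style integer-weight representation (which costs $2^{\Omega(r \log r)}$ in magnitude) both fail outright.

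To handle general weights, I would use a structural decomposition: after the harmless normalization $T = 1$, sort the weights and split the inputs into weight-scale buckets $B_j = \{i : w_i \in [2^{-j}, 2^{-(j-1)})\}$ for $j = 1, 2, \ldots$. Variables whose weight is below $1/r$ are individually negligible relative to the threshold and contribute only as an aggregated unweighted count, so one can truncate to the top $O(\log r)$ meaningful buckets at the cost of at most one extra unweighted threshold sub-circuit. Within each bucket the weight ratio is bounded by $2$, so rounding the weights to a common value per bucket turns the within-bucket contribution into a small-integer-weight threshold to which the easy case applies. Across buckets, the global decision can be written as a single monotone threshold on the $O(\log r)$ within-bucket partial counts (each of which is itself a small integer by construction), and this outer threshold again falls under the easy case.

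The principal obstacle will be ensuring that the cross-bucket combination stays truly polynomial rather than quasi-polynomial. A naive recursion that branches on each bucket's outcome separately would multiply sub-circuit sizes over the $O(\log r)$ scales and give $r^{\Theta(\log r)}$, matching the classical monotone-formula bound but falling short of the polynomial bound claimed. Avoiding the blowup requires expressing the inter-bucket combination as a \emph{single} outer monotone threshold gate on the per-bucket count-outputs, exploiting the fact that monotonicity lets excess weight in a heavier bucket be traded for a lowered count-threshold in a lighter bucket via a fixed inequality, rather than enumerated via a decision tree. Getting this single-gate combination to be correct and of polynomial size is the crux of the Beimel--Weinreb construction, and it is what pins down the absolute constant $k$ appearing in the bound $O(r^k)$.
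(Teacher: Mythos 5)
You are attempting to reprove a statement that the paper itself does not prove: Theorem~\ref{thm:beimel} is imported verbatim from Beimel and Weinreb (their Theorem~3.1), so the only ``proof'' in the paper is the citation, and your sketch must stand on its own. It does not, for two reasons. First, both of your weight-modification steps change the function being computed. A threshold function has to be computed \emph{exactly}: if you round the weights within a bucket to a common value, or replace the variables of weight below $1/r$ by an aggregated unweighted count, the circuit you build computes the threshold function of the \emph{modified} weight vector, which can disagree with $f$ on every input whose true weighted sum falls between the original and the perturbed thresholds. The statement gives you no margin assumption, and a given representation $(w_1,\dots,w_r,T)$ is in general not robust to a per-coordinate relative error of a factor of $2$, nor to merging unequal tiny weights; so you would need either an argument that some canonical representation of $f$ tolerates this rounding, or a mechanism that detects and corrects the boundary inputs, and neither is supplied. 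Relatedly, without rounding, $f$ is simply not a function of the per-bucket counts at all, so correctness of the ``counts plus one outer threshold'' architecture fails before the size question even arises.

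Second, even granting the rounding, the step you identify as decisive --- writing the cross-bucket combination as a single monotone threshold on the $O(\log r)$ bucket counts, correctly and in polynomial size --- is exactly the step you defer, saying it ``is the crux of the Beimel--Weinreb construction.'' Deferring the crux to the cited paper makes this a plan rather than a proof; the quasi-polynomial recursion you rightly reject is what a naive execution of your outline delivers, and the published argument avoiding it is genuinely more involved than bucket-and-round. The one part that is solid is the easy case: for non-negative integer weights of polynomial total magnitude, duplicating each input $w_i$ times and invoking a polynomial-size monotone circuit for unweighted thresholds (sorting networks or Valiant's construction) is correct --- but that is the standard, uncontroversial portion, not the content of the $O(r^k)$ bound claimed for arbitrary non-negative real weights.
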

	
	It follows from Theorem~\ref{thm:beimel} that if there was a monotone network (with threshold gates) of a given size approximating a harmonic extension $\hat{f}$, we could replace each gate with a polynomially sized monotone De Morgan circuit entailing a polynomial blowup to the size of the network. This construction, in turn, would imply the existence of a monotone De Morgan circuit of comparable size computing $f$ over Boolean inputs. We formalize this idea below:
	\begin{lemma} \label{lem:monoimpossibility}
		Let $f:\{0,1\}^d \to \{0,1,\dots,d\}$, and suppose that any monotone circuit, using only AND and OR gates, which calculates $f$ has size at least $\Omega(\exp(d^\alpha))$, for some $\alpha > 0$. Then, there exists an $\alpha' > 0$, such that if $N$ is a monotone threshold network of size $O\left(\exp(d^{\alpha'})\right)$, there exists ${x} \in [0,1]^{d}$, such that, 
		\[
		|N(x)-\hat{f}(x)| \geq \frac{1}{2}.
		\]
	\end{lemma}
	
	\begin{proof}
		Suppose towards a contradiction that there is a monotone network $N$ of size $s=\exp(d^{\alpha'})$ that approximates $\hat{f}$ in $[0,1]^{d}$
		within error less than $\frac{1}{2}$. Then, restricting $N$ to Boolean inputs, and rounding the output to the nearest integer would yield a monotone threshold circuit $C_N$ that computes $f$ (exactly) on Boolean inputs and has size $s+d$. With more details, every neuron in $N$ corresponds to one monotone threshold gate in $C_N$, and $d$ additional gates to account for the rounding.
		
		By Theorem~\ref{thm:beimel} the monotone circuit complexity of a circuit with only AND and OR gates (De Morgan circuit) computing a threshold function with positive coefficients is polynomial. Therefore, we claim that the existence of $C_N$ entails the existence of a monotone circuit $C'_N$ with AND and OR gates of size\footnote{By size we mean the number of gates in the circuit.} $O(s^t)$, for some constant $t \geq 1$, that computes $f$. 
		
		Indeed, by Theorem~\ref{thm:beimel} we may construct $C'_N$ by replacing every monotone threshold gate, $g$, in $C_N$ by a monotone De Morgan circuit $C_g$ computing $g$. As the size of $C_g$ is polynomial in the number of inputs to $g$ (and hence upper bounded by $s^k$ for an appropriate constant $k$) these replacements result in a polynomial blowup with respect to the size of $C_N$: the size of $C'_N$ is at most $(s+1)s^k=O(s^{k+1})$. Therefore setting $t$ to be $k+1$ we have that the size of $C'_n$ is at most $O(s^t)=O\left(e^{td^{\alpha'}}\right)$. If $\alpha' > 0$ is such that $(k+1)d^{\alpha'} \leq d^{\alpha}$, this contradicts the assumption about $f$.
	\end{proof}
	
	\paragraph{A reduction from general networks to circuits}
	Now we show that given a Boolean circuit to compute a given function $f: \{0,1\}^d \to \NN$, we can construct a general (not necessarily monotone) threshold network of comparable size to approximate $\hat{f}$ arbitrarily well. Formally we shall prove.
	\begin{lemma} \label{lem:nettocirc}
			Let $f:\{0,1\}^d\to \{0,1,\dots,d\}$ and let $\delta \in (0,1)$  and suppose that there exists a Boolean circuit of polynomial size which computes $f$. For every fixed $\eps>0$ there exist a general threshold network $N$ of polynomial size in $d$, such that for all ${x} \in [0,1]^{d}$, 
		\[
		|\hat{f}({x}) - N({x})| \leq \eps.
		\]
	\end{lemma}
	To estimate $\hat{f}(x)$ we can realize independent (polynomially many) copies
	of $X_\infty|X_0 = x$ and estimate 
	$$\EE\left[f(X_\infty)|X_0 = x\right],$$
	as in \eqref{eq:harmext}. To implement this idea, two issues need to be addressed:
	\begin{itemize}
		\item The use of randomness by the algorithm (our neural networks do not use randomness).
		\item The algorithm dealing with vectors in $[0,1]^{d}$ that may need infinitely many bits to be represented in binary expansion. 
	\end{itemize}
	
	We first present a randomized polynomial-time algorithm, denoted $A$, for approximating $\hat{f}$. We then show how to implement it with a (deterministic) threshold network. Algorithm $A$ works as follows. Let $q(),r()$ be polynomials to be defined later. First, the algorithm (with input $x$) only considers the $q(d)$ most significant bits in the binary representation of every coordinate in $x$. Next, it realizes $r(d)$ independent copies of $\{X^i\}_{i=1}^{r(d)} \sim X_\infty|X_0 = \tilde{x}$, and computes, using a Boolean circuit, for each one of these copies $f(X^i)$. The algorithm outputs $\sum\limits_{i = 1}^{r(d)}\frac{f(X^i)}{r(d)}$. Clearly, the running time of this algorithm is polynomial in the size of the Boolean circuit. 
	
	Let ${\widetilde{x}}$ be the vector obtained from $x$ when considering the $q(d)$ most significant bits in each coordinate, and observe 
	\begin{equation} \label{eq:eqinlaw}
		A(x) \stackrel{\rm law}{=} A({ \widetilde{x}}).
	\end{equation} 
	Keeping this in mind, we shall first require the following technical result.
	\begin{lemma}\label{lem:Chernoff}
		Let $f:\{0,1\}^d\to \{0,1,\dots,M\}$ and let $\delta \in (0,1)$ be an accuracy parameter. Then, if ${\tilde{x}} \in [0,1]^{d}$ is such that every coordinate ${ [\tilde{x}]_{i}}$ can be represented by at most $q(d)$ bits,
		\begin{equation} \label{eq:truncatedapprox}
			\mathbb{P}\left(|\hat{f}({{\tilde{x}}})-A({\tilde{x}})|>\delta\right) \leq 2e^{-\frac{2r(d)\delta^2}{M^2}}.
		\end{equation}
		As a consequence, for every $x \in [0,1]^{d}$,
		\[
		\mathbb{P}\left(|\hat{f}(x)-A(x)|>\delta + \frac{Md}{\sqrt{2}^{q(d)}}\right) \leq 2e^{-\frac{2r(d)\delta^2}{M^2}}.
		\]
	\end{lemma}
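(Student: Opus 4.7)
The plan is to prove the first tail bound by a standard Chernoff argument, and then derive the second as a (mostly deterministic) consequence via the Lipschitz property of $m$.

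For the first inequality, I would observe that, by construction, $r(n)\cdot A({\bf\tilde{p}})=t$ is a sum of $r(n)$ i.i.d.\ Bernoulli random variables $Z_1,\dots,Z_{r(n)}$, where $Z_k$ is the indicator that the $k$-th independent realization of $G({\bf\tilde{p}})$ contains a perfect matching. By the very definition of $m$, each $Z_k$ has mean $m({\bf\tilde{p}})$, so $\mathbb{E}[A({\bf\tilde{p}})]=m({\bf\tilde{p}})$. The claimed tail bound $2e^{-r(n)\delta^2/3}$ is then immediate from the two-sided Chernoff inequality for the empirical mean of bounded Bernoulli variables (in fact, Hoeffding's inequality gives the sharper constant $2$ instead of $1/3$). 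The assumption that each coordinate of ${\bf\tilde{p}}$ is representable in $q(n)$ bits plays no role in this step and only becomes relevant for the consequence.

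For the second inequality, the key observation is \eqref{eq:eqinlaw}: since $A$ reads only the top $q(n)$ bits of its input, $A({\bf p})\stackrel{\mathrm{law}}{=}A({\bf\tilde{p}})$. I would then control the deterministic gap between $m({\bf p})$ and $m({\bf\tilde{p}})$ using Lemma \ref{lem:lipschitzm}. Since $|{\bf p}_{ij}-{\bf\tilde{p}}_{ij}|\leq 2^{-q(n)}$ for every $(i,j)$, we have $\|{\bf p}-{\bf\tilde{p}}\|\leq n\cdot 2^{-q(n)}$, and therefore
$$|m({\bf p})-m({\bf\tilde{p}})|\leq n^2\cdot 2^{-q(n)}\leq \frac{n^2}{\sqrt{2}^{q(n)}}.$$
Coupling $A({\bf p})$ with $A({\bf\tilde{p}})$ through the equality in law, the triangle inequality shows that the event $\{|m({\bf p})-A({\bf p})|>\delta+n^2/\sqrt{2}^{q(n)}\}$ is contained in $\{|m({\bf\tilde{p}})-A({\bf\tilde{p}})|>\delta\}$. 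The first part then closes the argument.

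The argument is essentially routine and I do not foresee any genuine obstacle. The only minor bookkeeping lies in matching the quoted constant $1/3$ to a specific Chernoff form, and in cleanly separating the deterministic truncation error (controlled by Lipschitzness of $m$) from the stochastic sampling error (controlled by concentration), so that the two can be bounded independently and then combined by a single triangle inequality.
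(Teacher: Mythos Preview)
Your proposal is correct and follows essentially the same route as the paper: Chernoff for the i.i.d.\ Bernoulli average to get \eqref{eq:truncatedapprox}, then the Lipschitz estimate from Lemma~\ref{lem:lipschitzm} combined with \eqref{eq:eqinlaw} and the triangle inequality for the consequence. Your per-coordinate truncation bound $|{\bf p}_{ij}-{\bf\tilde p}_{ij}|\le 2^{-q(n)}$ is in fact slightly sharper than the paper's (which uses $\|{\bf p}-{\bf\tilde p}\|\le n/\sqrt{2}^{\,q(n)}$), but this only tightens the constant and does not change the argument.
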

	\begin{proof}
		We first establish the first part.
		Indeed, $A({\tilde{x}}) = \sum\limits_{i=1}^{r(d)}\frac{f(X^i)}{r(d)}$ where ${\frac{f(X_i)}{r(d)}}$ are independent and by assumption on $f$, bounded in $[0,\frac{M}{r(d)}]$. Moreover, we have from the definition of the harmonic extension 
		$$\mathbb{E}\left[A({\tilde{x}})\right] = \EE\left[f(X_\infty)|X_0 =  \tilde{x}\right] = \hat{f}({\tilde{x}}).$$
		 Thus, \eqref{eq:truncatedapprox} is a consequence of Hoeffding's inequality inequality, \cite[Theorem 2.8]{boucheron2013concentration}. Now, ${\tilde{x}}$ was obtained from ${\bf x}$ by keeping the $q(n)$ most significant bits. Thus, $\|{ x} - { \tilde{x}}\| \leq \sqrt{\frac{d}{2^{q(n)}}}$, and by the third property in Lemma \ref{lem:harmprop},
		\[
		|\hat{f}({x}) - \hat{f}({\tilde{x}})| \leq \frac{Md}{\sqrt{2}^{q(d)}}.
		\]
		So, because of \eqref{eq:eqinlaw}, and \eqref{eq:truncatedapprox},
		\begin{align*}
			&\mathbb{P}\Bigg(|\hat{f}({{x}})-A({x})|>\delta + \frac{Md}{\sqrt{2}^{q(d)}}\Bigg) \\
			&= \mathbb{P}\left(|\hat{f}({{x}})-A({ \tilde{x}})|>\delta + \frac{Md}{\sqrt{2}^{q(d)}}\right)\\
			&\leq \mathbb{P}\left(|\hat{f}({\bf x}) - \hat{f}({\tilde{x}})| + |\hat{f}({{\tilde{x}}})-A({\tilde{x}})|>\delta + \frac{Md}{\sqrt{2}^{q(d)}}\right)\\
			&\leq \mathbb{P}\left(|\hat{f}({{\tilde{x}}})-A({\tilde{x}})|>\delta\right) \leq 2e^{-\frac{2r(d)\delta^2}{M^2}}.
		\end{align*}
	\end{proof}
	
	We next show how to implement this algorithm by a neural network of polynomial size that does not use randomness.
	\begin{lemma}\label{lem:construction}
		Let $f:\{0,1\}^d\to \{0,1,\dots,M\}$ and let $\delta \in (0,1)$ be a fixed constant. Suppose that there exists a Boolean circuit of polynomial size that computes $f$. Then, there exists a neural network of polynomial size $N$ such that for every ${x} \in [0,1]^{d},$
		\[
		|\hat{f}({x})-N({x})| \leq \delta + \frac{Md}{\sqrt{2}^{q(n)}}.
		\]
	\end{lemma}
	\begin{proof}
		By standard results regarding universality of Boolean circuits (See~\cite[Theorem 20.3]{shalev2014understanding} or \cite[Theorem 9.30]{sipser1996introduction}) algorithm $A$ can be implemented by a neural network
		$N'$ of polynomial size that get as additional input $dq(d)r(d)$ random bits. In the previous claim, we have also used the fact that $A$ implements a Boolean circuit of polynomial size to compute $f$. 
		
		In light of this, our task is to show how to get rid of these random bits.
		The number of points in $[0,1]^{d}$ represented by at most $q(d)$ bits in each coordinate is at most $2^{dq(n)}$. On the other hand by the first part of 
		Lemma~\ref{lem:Chernoff} the probability for a given input ${\tilde{x}}$ (with each coordinate in ${\tilde{x}}$ represented by at most $q(d)$ bits) that $|N'({ \tilde{x}})-\hat{f}({{\tilde{x})}}| \geq \delta$ is $2^{-\Omega(r(d)/M^2)}$. As long as $r(d)$ is a polynomial with a large enough degree, with respect to $q(d)$ and $M$, we have that $2^{dq(d)-\Omega(r(n)/M^2)}<1$. Hence, there must exist a fixed choice for the $dq(d)r(d)$
		(random) bits used by the algorithm such that for \emph{every} input ${x}$ the additive error of the neural network $N'$ on $x$
		is no larger than $\delta$. Fixing these bits and hard-wiring them to $N'$ results in the desired neural network $N.$
		The final result follows from the proof of the second part of Lemma \ref{lem:Chernoff}.
	\end{proof}
Finally we may prove Lemma \ref{lem:nettocirc}

\begin{proof}{proof of Lemma \ref{lem:nettocirc}}
	Set $\delta = \frac{\eps}{2}$ and let $N$ be the network constructed in Lemma~\ref{lem:construction} with $M = d$ and accuracy parameter $\delta$. Choose $q(d)$ which satisfies $q(d)>\log (\frac{4d^{4}}{\eps^2} )$. Thus, Lemma \ref{lem:construction} implies, for every ${x} \in [0,1]^{d}$:
	\[
	|\hat{f}(x)-N({x}))| \leq \delta + \frac{d^2}{\sqrt{2}^{q(d)}} \leq \frac{\eps}{2} + \frac{\eps}{2} = \eps.
	\]
	The claim is proven.
	\iffalse
	Set $\delta=\eps/3$ and use the network $N$ constructed in Theorem~\ref{lem:construction} with accuracy parameter $\delta=\eps/3$. Recall that on input ${\bf p}$ the network runs the algorithm to estimate the probability a random graph $G(\bf \widetilde{p})$ contains a perfect matching with $\widetilde{p}$ where $\widetilde{p}$ is obtained by having every coordinate in ${\bf p}$ truncated to the $q(n)$ most significant bits of precision. In particular 
	$\|{\bf p}-{\bf \widetilde{p}}\|^2 \leq \frac{n^2}{2^{q(n)}}.$ It follows that by taking $q(n)>\log (\frac{9n^{4}}{\eps^2} )$ we can ensure that for every ${\bf p} \in [0,1]^{n \times n}$ we have that
	\[
	|{m(\bf p)}-m({\bf\widetilde{p}})| \leq n\|{\bf p} - \tilde{\bf p}\| \leq \frac{\eps}{3},
	\]
	where the first inequality is Lemma \ref{lem:lipschitzm}.
	Therefore for every ${\bf p} \in [0,1]^{n \times n}$ we have by the above inequalities and the triangle inequality that
	\[
	|N({\bf p})-m({\bf p})| \leq |m({\bf p})-m({\widetilde{p}})|+|m({\bf \widetilde{p}})-N({\bf p}))| < \eps.
	\]
	As desired.
	\fi
\end{proof}
	\subsection{Proving exponential separation}
	To apply our reductions, Lemma \ref{lem:monoimpossibility} and Lemma \ref{lem:nettocirc}, and complete the proof of Theorem \ref{thm:monotoneeffic} we now require a function $f:\{0,1\}^d \to \{0,1,\dots,d\}$ that is
	\begin{enumerate}
		\item Computable by a polynomial-size Boolean circuit.
		\item Cannot be computed by any Monotone circuit of size $e^{d^{\alpha}}$ for some $\alpha >0$.
	\end{enumerate}
	 
	 The Tardos function, introduced in \cite{tardos1988gap}, will satisfy these conditions. The Tardos function builds upon the seminal work of Razborov, \cite{razborov1985lower}, who studied the hardness of monotone computation for perfect matching. The function is constructed as a graph invariant and is always sandwiched between the clique and chromatic number of the graph. Below we outline the necessary properties.
	\begin{theorem}[{\cite{tardos1988gap}}]\label{thm:Tardos}
		Assume $d$ is a square. There exists a monotone function $f_{\mathrm{Tar}}:\{0,1\}^d \to \{0,1,\dots \sqrt{d}\}$ which can be computed in polynomial time.
		Furthermore, any Boolean circuit with only AND and OR gates (monotone circuit) that computes $f_{\mathrm{Tar}}$ has size $e^{\Omega( d^{\alpha})},$ for some $\alpha > 0$. 
	\end{theorem}
	Razborov's original work \cite{razborov1985lower} proves a similar result for finding a perfect matching. However, the bound is super-polynomial instead of truly exponential. In what comes next, we could have also used the indicator function of a perfect matching in a graph to obtain another separation, albeit weaker, result.

	We now let $\widehat{f_{\mathrm{Tar}}}$ be the harmonic extension of the Tardos function. Theorem \ref{thm:monotoneeffic} will follow as an immediate consequence of the following more specific theorem.
	\begin{theorem}\label{thm:monotone}
		The function $\widehat{f_{\mathrm{Tar}}}:[0,1]^d\to[0,\sqrt{d}]$ is a monotone function with, which satisfies the following: 
		\begin{itemize}
			\item There exists $\alpha' > 0$ such that if $N$ is a monotone threshold network of size less than $e^{d^{\alpha'}}$, there exists ${x} \in [0,1]^{d}$, such that
			\[
			|N(x)-\widehat{f_{\mathrm{Tar}}}(x)| \geq \frac{1}{2}.
			\]
			\item For every fixed $\eps>0$ there exist a general threshold network $N$ of polynomial size in $n$, such that for all ${x} \in [0,1]^{d}$, 
			\[
				|N(x)-\widehat{f_{\mathrm{Tar}}}({x})| \leq \eps.
			\]
				\end{itemize}
	\end{theorem}
	\begin{proof}
		First observe that by Theorem \ref{thm:Tardos}, $f_{\mathrm{Tar}}$ is monotone, which implies the same thing to $\widehat{f_{\mathrm{Tar}}}$ by Lemma \ref{lem:harmprop}. keeping Theorem \ref{thm:Tardos} in mind, the first property is an immediate corollary of Lemma \ref{lem:monoimpossibility}. 
		
		For the second property, we use standard results to Turing machines into threshold circuits. Specifically, a Turing machine that decides an algorithmic problem with inputs of size $d$ in time $t(d)$ can be realized as a threshold circuit with $O(t(d)^2)$ gates, solving the algorithmic problem on every input of length $d$,~\cite{sipser1996introduction,shalev2014understanding}.
		Thus, since $f_{\mathrm{Tar}}$ is computable in polynomial time, there exists a Boolean circuit of polynomial size that computes $f$. The second property is now a consequence of Lemma \ref{lem:nettocirc}.
	\end{proof}

            \section{Experiments}

            In this section, we report on the numerical experiments we've conducted to support our theoretical results. The construction, presented in Section \ref{sec:construction}, was implemented in Matlab\footnote{The code is publicly available at \url{https://github.com/danmiku/MonotoneNetworks}.}.

         \paragraph{Interpolation error} We evaluated the interpolation error of our construction in different dimensions with randomly generated monotone data sets. The evaluation was performed in different dimensional scales and we constructed interpolating networks with input dimensions $10$, $50$, and $200$. In each dimension $d$, we evaluated three data sets of sizes $d, 2d, 5d$. In each case, the data set of size $n$ was constructed in the following form: We first generated $n$ points $(x_i)_{i=1}^n$ uniformly at random, and independently from one another, from the unit cube $[0,1]^d$. The labels were then generated by applying some monotone function $f:\RR^d \to \RR$ on each point, obtaining $\{f(x_i)\}_{i=1}^n$. To evaluate the resulting network, $N$, in each case we computed the interpolation error as $\sum\limits_{i=1}^n \left(f(x_i)-N(x_i)\right)^2$. The monotone functions we considered in our experiments were $f(x) = \|x\|_2$, and  $f(x) = e^{\|x\|_2}$. Unsurprisingly, and in line with Theorem \ref{thm:construction}, the interpolation error was $0$ in each considered case. We repeated the same experiments where the points $\{x_i\}_{i=1}^n$ formed a grid $\frac{1}{\sqrt[d]{n}}\mathbb{Z}^d \cap [0,1]^d$ and obtained the same results.

         \paragraph{Extrapolation error} To evaluate the extrapolation error of our construction we followed similar steps to the one detailed in Section \ref{sec:universal}. More specifically, we constructed monotone data sets with $\{x_i\}_{i=1}^n$ points forming a grid $\frac{1}{\sqrt[d]{n}}\mathbb{Z}^d \cap [0,1]^d$ and labels taken as $\{f(x_i)\}_{i=1}^n$. The extrapolation error of the obtained network $n$ was evaluated by estimating the $L_\infty$ error $E:=\max\limits_{x \in [0,1]^d} |f(x) - N(x)|$.
         
        To evaluate the error $E$, instead of directly computing it, we employed a straightforward stochastic approximation scheme. Thus, we sampled a new batch of more than $6^d$ random points $\{z_i\}_{i=1}^{6^d}$, taken from $[0,1]^d$ and augmented the sample by the point $z_0 := (1-10^{-7},\dots, 1-10^{-7})$, at which we expect to approximately find the maximal error. The estimated error was then computed as $\hat{E}:=\max\limits_{z_i}|f(z_i) - N(z_i)|$. The error is reported, rounded to 4 digits, in Table \ref{tab:extra} below.

    \begin{table}[!htb] \footnotesize
    	\begin{minipage}{.5\linewidth}
    		 
    		 \centering
    \begin{tabular}{|c|c|c|c|}
        \hline
        $f(x) = \|x\|_2$ & \textbf{$n = 2^d$} & \textbf{$n = 3^d$} & \textbf{$n = 4^d$} \\
        \hline
        \multicolumn{1}{|c|}{\textbf{$d = 2$}} & \makecell{1.4037 \\ \bf (1.4142)} & \makecell{0.6865 \\ \bf (0.7071)} & \makecell{0.4499\\ \bf(0.4714)} \\
        \cline{1-1}
        \hdashline
        \textbf{$d = 3$} & \makecell{1.6858\\ \bf (1.732)}& \makecell{0.8248\\ \bf (0.8660)}& \makecell{0.5611 \\ \bf (0.5773)} \\
        \cline{1-1}
        \hdashline
        \textbf{$d = 4$} & \makecell{1.9151 \\ \bf (2.0)} & \makecell{0.9625 \\ \bf(1.0)} & \makecell{0.6314\\\bf(0.6666)} \\
        \cline{1-1}
        \hdashline
        \textbf{$d = 5$} & \makecell{2.1732 \\ \bf(2.2360)} & \makecell{0.9834\\\bf(1.1180)} & \makecell{0.6688\\\bf(0.7453)}  \\
        \cline{1-1}
        \hdashline
        \textbf{$d = 6$} & \makecell{2.2610\\\bf(2.4494)} & \makecell{1.1024\\\bf (1.2247)} & \makecell{0.6950 \\ \bf (0.8164)} \\
        \cline{1-1}
        \hline
    \end{tabular}
\end{minipage}
\begin{minipage}{.5\linewidth}
	    
	    \centering
    \begin{tabular}{|c|c|c|c|}
        \hline
        $f(x) = e^{\|x\|_2}$ & \textbf{$n = 2^d$} & \textbf{$n = 3^d$} & \textbf{$n = 4^d$} \\
        \hline
        \multicolumn{1}{|c|}{\textbf{$d = 2$}} & \makecell{3.1133 \\ \bf (5.8170)} & \makecell{2.0851 \\ \bf (2.9085)} & \makecell{1.5461\\ \bf(1.939)} \\
        \cline{1-1}
        \hdashline
        \textbf{$d = 3$} & \makecell{4.6522\\ \bf (9.7899)}& \makecell{3.2748\\ \bf (4.8949)}& \makecell{2.4792 \\ \bf (3.2633)} \\
        \cline{1-1}
        \hdashline
        \textbf{$d = 4$} & \makecell{6.3891 \\ \bf (14.7781)} & \makecell{4.6708 \\ \bf(7.3890)} & \makecell{3.5954\\\bf(4.92603)} \\
        \cline{1-1}
        \hdashline
        \textbf{$d = 5$} & \makecell{8.3565 \\ \bf(20.9217)} & \makecell{6.2976\\\bf(10.4608)} & \makecell{4.9162\\\bf(6.9739)}  \\
        \cline{1-1}
        \hdashline
        \textbf{$d = 6$} & \makecell{10.5824\\\bf(28.3710)} & \makecell{8.1791\\\bf (14.1855)} & \makecell{6.4633 \\ \bf (9.4570)} \\
        \cline{1-1}
        \hline
    \end{tabular}
\end{minipage}
\caption{\small Extrapolation errors for the norm function $f(x) = \|x\|_2$ on the left, and the exponential norm function $f(x) = e^{\|x\|_2}$ on the right.} \label{tab:extra}
\end{table}

 The errors were evaluated in dimensions $d = 2,3,4,5,6$., and the columns correspond to the $3$ different sizes of the grid of interpolated points $\mathbb{Z}^d\cap [0,1]^d$, $\frac{1}{2}\mathbb{Z}^d\cap [0,1]^d$, and $\frac{1}{3}\mathbb{Z}^d\cap [0,1]^d$. Extrapolation error bounds predicted by Theorem \ref{thm:universalapprox} appear in brackets. For a grid of size $m^d$ the error bound was calculated, as in Theorem \ref{thm:universalapprox}, by $\frac{\sqrt{d}L}{m-1}$, where $L$ is the Lipschitz constant of $f$ and $\frac{\sqrt{d}}{m-1}$ is the diameter of each cell in the grid $\frac{1}{m}\mathbb{Z}^d$. The Lipschitz constant of $\|x\|_2$ is $1$ and on $[0,1]^d$ the Lipschitz constant of $e^{\|x\|_2}$ is $e^{\sqrt{d}}$. To further demonstrate Theorem \ref{thm:universalapprox}, in the case $d = 2$, for $f(x) = e^{\|x\|_2}$ we have also evaluated the extrapolation error when the network interpolates the grid $\frac{1}{101}\mathbb{Z}^2 \cap[0,1]$ and obtained an error of $0.0578$  with predicted error $0.0581$.

\paragraph{Visualizations} Finally, we provide visualizations of our obtained networks when $d = 2$. Again, we considered the two functions $f(x) = \|x\|_2$ and $f(x) = e^{\|x\|_2}$. All networks were constructed by interpolating the functions' values over grids of different sizes. Specifically, we considered three different scales, $10 \times 10$, $20 \times 20$, and $30 \times 30$ grids. The plots for $f(x)= \|x\|_2$ appear in Figure \ref{fig:norm} on the left, and the plots for $f(x)= e^{\|x\|_2}$ appear in Figure \ref{fig:norm}, on the right. In addition, each figure contains a plot of the original function for reference. 
In each figure, the top left corner contains a plot of interpolation over a $10 \times 10$ grid, the top right corner is over a $20 \times 20$ grid, and the bottom left corner is over a $30 \times 30$ grid. The bottom right corner plots the target function.
As can be seen, when the grid becomes finer our interpolating network becomes more similar to the target function.

\begin{figure}
	\begin{minipage}{.5\textwidth}
		\centering
		\includegraphics[width=0.9\linewidth]{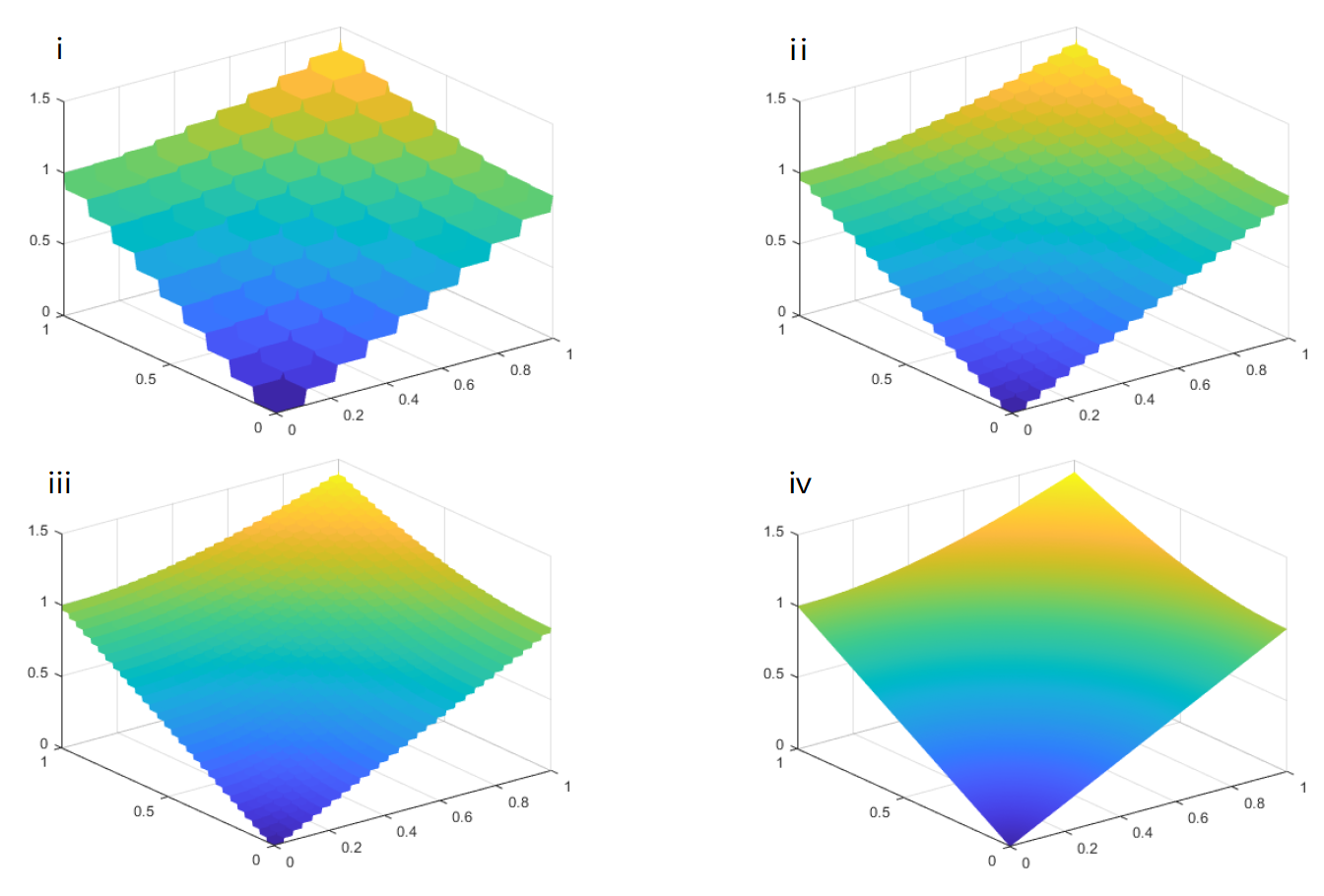}
	\end{minipage}
	\begin{minipage}{.5\textwidth}
		\centering
		\includegraphics[width=0.9\linewidth]{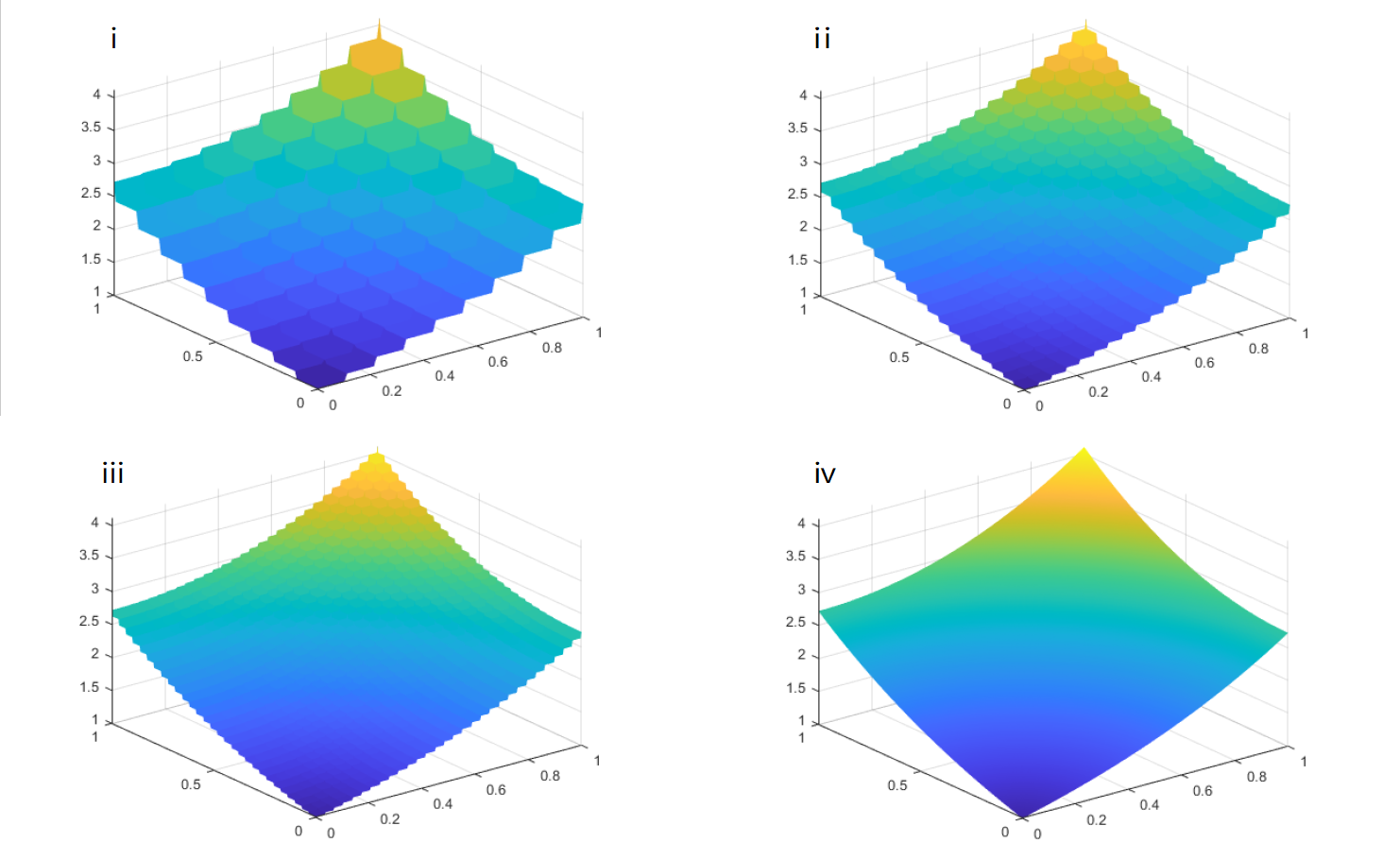}
	\end{minipage}
		\captionof{figure}{Plots of the monotone interpolating networks with labels taken from the function $f(x) = \|x\|_2$ on the left, and $f(x) = e^{\|x\|_2}$ on the right. The plots contain interpolations over three grids: (i) $10 \times 10$ grid, (ii) $20\times20$ grid, (iii) $30\times 30$ grid, and (iv) contains a plot of the original function.}\label{fig:norm}
\end{figure}

\section{Conclusion}

	We studied neural networks with non-negative weights and examined their power and limitations in approximating monotone functions.  
	
	Our results reveal that for the ReLU activation, restricting the weights to be non-negative severally limits the ability of the model to express monotone functions. For threshold activation, we have shown that the restriction to positive parameters is less severe and that universality can be achieved at constant depth. 
	In addition, we have shown that monotone neural networks can be much more resource-consuming, in terms of the number of neurons needed to approximate monotone functions. 
	
	We focused on the threshold activation function. It is an interesting direction to extend our results for other activation functions such as sigmoids. For the universality result of depth 4 monotone networks it seems plausible that one could approximate thresholds by sigmoids and prove that monotone networks of depth 4 with sigmoids are universal approximators of monotone functions. For our lower bounds, based on the matching function $m$ it appears that new ideas are needed to show a super polynomial separation between the size needed for monotone as opposed to arbitrary networks with sigmoids to approximate $m$.
	
 Recent results in Boolean circuit complexity have demonstrated the existence of monotone functions that a polynomial-sized circuit of bounded depth can compute but require super polynomial monotone circuits to compute~\cite{chen2017addition,garg2018monotone}. 
 It is reasonable to expect that one could use our machinery to demonstrate monotone real functions that can be computed by a bounded depth circuit yet require super polynomial size to be approximated by monotone networks. Proving separation results between monotone and non-monotone networks with respect to the square loss is another avenue for further research. 
	
	One aspect we did not consider here is learning neural networks with positive parameters using gradient descent. It would be interesting to examine the efficacy of gradient methods both empirically and theoretically. Such a study could lead to further insights regarding methods that ensure that a neural network approximating a monotone function is indeed monotone. Finally, we did not deal with the generalization properties of monotone networks: Devising tight generalization bounds for monotone networks is left for future study. 
	\section*{Acknowledgements}
	We thank  Arkadev Chattopadhyay for helpful feedback and Todd Millstein for discussing~\cite{sivaraman2020counterexample} with us which led us to think about monotone neural networks. We are grateful to David Kim for implementing our construction of a monotone neural network and testing it over several monotone data sets. Finally, we thank Bruno Pasqualotto Cavalar for bringing to our attention the work done in~\cite{chen2017addition,garg2018monotone}.

	\bibliographystyle{plain}
	\bibliography{reference}

\end{document}